\theoremstyle{definition}
\newtheorem{theorem}{Theorem}
\newtheorem{condition}[theorem]{Condition}
\setlist[enumerate]{itemsep=0.2ex, topsep=0.5\topsep}
\setlist[description]{itemsep=0.2ex, topsep=0.5\topsep}
\setlist[itemize]{itemsep=0.2ex, topsep=0.5\topsep}
\def\thmt@refnamewithcomma #1#2#3,#4,#5\@nil{%
	\@xa\def\csname\thmt@envname #1utorefname\endcsname{#3}%
	\ifcsname #2refname\endcsname
	\csname #2refname\expandafter\endcsname\expandafter{\thmt@envname}{#3}{#4}%
	\fi
}
\begin{document}
\title{\vspace{-25pt} Gradual Weisfeiler-Leman: Slow and Steady Wins the Race\thanks{This work was supported by the Vienna Science and Technology Fund (WWTF) through project VRG19-009.
}}

\author[1,2]{Franka Bause}
\author[1,3]{Nils M.~Kriege}

\affil[1]{Faculty of Computer Science, University of Vienna, Vienna, Austria}
\affil[2]{UniVie Doctoral School Computer Science, University of Vienna, Vienna, Austria}
\affil[3]{Research Network Data Science, University of Vienna, Vienna, Austria}
\affil[ ]{\ttfamily\{franka.bause,nils.kriege\}@univie.ac.at}
\date{\vspace{-50pt}}

\maketitle

\begin{abstract}
	The classical Weisfeiler-Leman algorithm aka color refinement is fundamental for graph learning with kernels and neural networks. Originally developed for graph isomorphism testing, the algorithm iteratively refines vertex colors.
	On many datasets, the stable coloring is reached after a few iterations and the optimal number of iterations for machine learning tasks is typically even lower. This suggests that the colors diverge too fast, defining a similarity that is too coarse. 
	We generalize the concept of color refinement and propose a framework for gradual neighborhood refinement, which allows a slower convergence to the stable coloring and thus provides a more fine-grained refinement hierarchy and vertex similarity. We assign new colors by clustering vertex neighborhoods, replacing the original injective color assignment function.
	Our approach is used to derive new variants of existing graph kernels and to approximate the graph edit distance via optimal assignments regarding vertex similarity. We show that in both tasks, our method outperforms the original color refinement with only a moderate increase in running time advancing the state of the art.
\end{abstract}

\section{Introduction}
\label{sec:introduction}
The (1-dimensional) Weisfeiler-Leman algorithm, also referred to as \emph{color refinement}, iteratively refines vertex colors by encoding colors of neighbors and was originally developed as a heuristic for the graph isomorphism problem. Although it cannot distinguish some non-isomorphic graph pairs, for example strongly regular graphs, it succeeds in many cases. It is widely used as a sub-routine in isomorphism algorithms today to reduce ambiguities that have to be resolved by backtracking search~\cite{DBLP:journals/jsc/McKayP14}.
It has also gained high popularity in graph learning, where the technique is used to define graph kernels~\cite{wlkernels,wwlkernels,lwwlkernels,KriegeGW16} and to formalize the expressivity of graph neural networks, see the recent surveys~\cite{wlmlsurvey1,wlmlsurvey2}.
Graph kernels based on Weisfeiler-Leman refinement provide remarkable predictive performance while being computationally highly efficient. The original Weisfeiler-Leman subtree kernel~\cite{wlkernels} and its variants and extensions, e.g.,~\cite{wwlkernels,lwwlkernels,KriegeGW16}, provide state-of-the-art classification accuracy on many datasets and are widely used baselines.
The update scheme of the Weisfeiler-Leman algorithm is similar to the idea of neighborhood aggregation in graph neural networks (GNNs). 
It has been shown that (i) the expressive power of GNNs is limited by the Weisfeiler-Leman algorithm, and (ii) that GNN architectures exist that reach this expressive power~\cite{Morris2019,Xu2019}.

As a consequence of its original application, the Weisfeiler-Leman algorithm assigns discrete colors and does not distinguish minor and major differences in vertex neighborhoods. Most Weisfeiler-Leman graph kernels match vertex colors of the first few refinement steps by equality, which can be considered too rigid, since these colors encode complex neighborhood structures.
In machine learning tasks, a more fine-grained differentiation appears promising. Real-world data is often noisy leading to small differences in vertex degree. Such differences get picked up by the refinement strategy of the Weisfeiler-Leman algorithm and cannot be distinguished from significant differences.

We address this problem by providing a different approach to the refinement step of the Weisfeiler-Leman algorithm: We replace the injective relabeling function with a non-injective one to gain a more gradual refinement of colors. This allows obtaining a finer vertex similarity measure, distinguishing between large and small changes in vertex neighborhoods with increasing radius.
We characterize the set of functions that, while not necessarily injective, guarantee that the stable coloring of the original Weisfeiler-Leman algorithm is reached after a possibly higher number of iterations. Thus, our approach preserves the expressive power of the  Weisfeiler-Leman algorithm. We discuss a realization of such a function and use $k$-means clustering in our experimental evaluation as an exemplary one.

\paragraph{Our Contribution}
\begin{enumerate}
	\item We propose refining, neighborhood preserving (\textit{renep}) functions, which generalize the concept of color refinement. This family of functions leads to the coarsest stable coloring while only incorporating direct neighborhoods. 
	\item We show the connections of our approach to the original Weisfeiler-Leman algorithm, as well as other vertex refinement strategies.
	\item We propose two new graph kernels based on renep functions, which outperform state-of-the-art kernels on synthetic and real-world datasets, with only a moderate increase in running time.
	\item We apply our new approach for approximating the graph edit distance via bipartite graph matching and show that it outperforms state-of-the-art heuristics.
\end{enumerate}

\section{Related Work}
\label{sec:relatedwork}
Various graph kernels based on Weisfeiler-Leman refinement have been proposed~\cite{wlkernels,wwlkernels,lwwlkernels,KriegeGW16,mpgk}. Recent comprehensive experimental evaluations confirm their high classification accuracy on many real-world datasets~\cite{Kriege2020,BorgwardtGLOR20}. In both studies, the classical Weisfeiler-Leman subtree kernel~\cite{wlkernels} is shown to provide a competitive baseline. Variants based on optimal assignments~\cite{KriegeGW16,mpgk} improve the classification accuracy on some datasets and achieve excellent results overall.

Most Weisfeiler-Leman based approaches implicitly match colors by equality, which can be considered too rigid, since colors encode unfolding trees representing complex neighborhood structures. Some recent works address this problem:
\citet{Yan2015} introduced similarities between colors using techniques inspired by natural language processing, which were subsequently refined by \citet{Narayanan2016}.
\citet{80_generalizedWLkernel} define a distance function between colors by comparing the associated unfolding trees using a tree edit distance. Based on this distance, the colors are clustered to obtain a new graph kernel.
Although the tree edit distance is polynomial-time computable, the running time of the algorithm is very high.
A kernel based on the Wasserstein distance of sets of unfolding trees was proposed by~\citet{WWLS22}. The vertices of the graphs are embedded into $\ell_1$ space using an approximation of the tree edit distance between their unfolding trees. A graph can then be seen as a distribution over those embeddings. While the function proposed is not guaranteed to be positive semidefinite, the method showed results similar to and, in some cases, exceeding state-of-the-art techniques. The running time, however, is still very high and the method is only feasible for unfolding trees of small height.
These approaches define similarities between Weisfeiler-Leman colors and the associated unfolding trees. Our approach, in contrast, alters the Weisfeiler-Leman refinement procedure itself and does not rely on the computationally expensive matching of unfolding trees.

Some graph kernels use a neighborhood aggregation scheme similar in spirit to the Weisfeiler-Leman algorithm with a non-injective relabeling function.
The neighborhood hash kernel~\cite{HidoKashima09} represents labels by bit-vectors and uses logical operations and hashing to encode neighborhoods efficiently. 
\citet[Section 3.7.3.2]{Shervashidze2012} proposed to compress neighborhood label histograms using locality sensitive hashing allowing collisions of similar neighborhoods.
Propagation kernels~\cite{NeumannGBK16} provide a generic framework to obtain graph kernels from (neighborhood) propagation schemes by comparing label distributions after every propagation step.
In contrast to these methods, our approach uses a data-dependent non-injective relabel function and guarantees that the expressive power of the Weisfeiler-Leman algorithm is reached.

Several authors proposed techniques supporting graphs with continuous attributes directly based on or inspired by the Weisfeiler-Leman algorithm. 
Propagation kernels were adapted to this setting by using a hash function to map continuous attributes to discrete labels~\cite{NeumannGBK16}.
The hash graph kernel framework~\cite{MorrisKKM16} repeatedly performs such a discretization using random hash functions, applies a  kernel for graphs with discrete labels to each resulting graph and combines their feature vectors. It obtains high classification accuracies when combined with the Weisfeiler-Leman subtree kernel.
The Wasserstein Weisfeiler-Leman graph kernel~\cite{wwlkernels} establishes node matchings similar to the Weisfeiler-Leman optimal assignment kernel~\cite{KriegeGW16,wlmlsurvey2}. The authors support continuous attributes by replacing discrete colors with real-valued vectors without guaranteeing that the resulting function is positive semidefinite.
Finally, graph neural networks can be viewed as a neural version of the Weisfeiler-Leman algorithm, where continuous feature vectors replace colors, and neural networks are used to aggregate over local node neighborhoods~\cite{Morris2019,Xu2019,wlmlsurvey2}.
None of these methods explicitly aims to slow down the Weisfeiler-Leman refinement process.

\section{Preliminaries}
\label{sec:preliminaries}

In this section we provide the definitions necessary to understand our new vertex refinement algorithm. We first give a short introduction to graphs and the original Weisfeiler-Leman algorithm, before we cover graph kernels.

\paragraph{Graph Theory}
A \emph{graph} $G = (V,E,\mu,\nu)$ consists of
a set of vertices $V$, denoted by $V(G)$,
a set of edges $E(G) = E \subseteq {{V}\choose{2}}$ between the vertices,
a labeling function for the vertices $\mu \colon V \rightarrow L$, and  
a labeling function for the edges $\nu\colon  E \rightarrow L$. 
The set $L$ contains categorical labels, which can be represented as natural numbers.
We discuss only undirected graphs and denote an edge between $u$ and $v$ by $uv$.
The set of neighbors of a vertex $v \in V$ is denoted by $N(v) = \{u\mid  uv\in E\}$.
A \textit{(rooted) tree} $T$ is a simple (no self-loops or multi-edges), connected graph without cycles and with a designated  node $r$ called \emph{root}. 
A tree $T'$ is a \textit{subtree} of a tree $T$, denoted by $T'\subseteq T$, iff $V(T') \subseteq V(T)$. The root of $T'$ is the node closest to the root in $T$.

A vertex \emph{coloring} $c\colon V(G) \to \mathbb{N}_0$ of a graph $G$ is a function assigning each vertex a color. For vertices with $c(u) = c(v)$ we also write $u \approx_{c} v$.
A coloring $\pi$ on a set $S$ is a \emph{refinement} of (or \textit{refines}) a coloring $\pi'$, iff $s_1 \approx_\pi s_2 \Rightarrow s_1 \approx_{\pi'} s_2$ for all $s_1$, $s_2$ in $S$. We denote this by $\pi\preccurlyeq \pi'$ and write $\pi \equiv \pi'$ if $\pi\preccurlyeq \pi'$ and $\pi'\preccurlyeq \pi$. 
If $\pi\preccurlyeq \pi'$ and  $\pi \not\equiv \pi'$, we say that $\pi$ is a \emph{strict refinement} of $\pi'$, written $\pi \prec \pi'$.
The refinement relation defines a partial ordering on the colorings. 

\paragraph{Color Hierarchy}
We consider a sequence of vertex colorings $(\pi_0, \pi_1, \dots, \pi_h)$ with $\pi_h \preccurlyeq \dots \preccurlyeq \pi_0$ and assume that for $i \neq j$ the colors assigned by $\pi_i$ and $\pi_j$ are distinct. %
We can interpret such a sequence of colorings as a \textit{color hierarchy}, i.e., a tree $\mathcal{T}_{h}$ that contains a node for each color $c \in \{\pi_i(v) \mid i \in \{0,\dots, h\}\wedge  v \in V(G) \}$ and an edge $(c,d)$ iff $\exists v \in V(G)\colon \pi_i(v)=c \wedge \pi_{i+1}(v)=d$.
We associate each tree node with the set of vertices of $G$ having that color.\footnote{Here, we consider a color hierarchy for a single graph. However, given corresponding colorings on a set of graphs, a color hierarchy can be generated in the same way. In an inductive setting, where no fixed set of graphs is given, the color hierarchy contains all colors, that can be produced by the corresponding coloring algorithm.}
Here, we assume that the initial coloring is uniform. If this is not the case, we add an artificial root node and connect it to the initial colors. Likewise we insert the coloring $\pi_0 = \set{V(G)}$ as first element in the sequence of vertex colorings.
An example color hierarchy is given in Figure~\ref{fig:wlex}. 

Using this color hierarchy we can derive multiple colorings on the vertices: Choosing exactly one color on every path from the leaves to the root (or only the root), always leads to a valid coloring. The finest coloring is induced by the colors representing the leaves of the tree. Given a color hierarchy $T$, we denote this coloring (which is equal to $\pi_h$) by $\pi_T$.

\paragraph{Weisfeiler-Leman Color Refinement}

The 1-dimensional Weisfeiler-Leman (WL) algorithm or color refinement~\cite{colref,WL} starts with a coloring $c_0$, where all vertices have a color representing their label (or a uniform coloring in case of unlabeled vertices). 
In iteration $i$, the coloring $c_i$ is obtained by assigning each vertex $v$ in $V(G)$ a new color according to the colors of its neighbors, i.e.,
$$c_{i+1}(v) = z\left(c_i(v),\{\!\!\{c_i(u)\mid u \in N(v)\}\!\!\}\right),$$
where $z\colon \mathbb{N}_0 \times \mathbb{N}_0^{\mathbb{N}_0} \to \mathbb{N}_0$ is an injective function.
Figure~\ref{fig:wlex} depicts the first iterations of the algorithm for an example graph.

\begin{figure}[t]
	\centering
	\begin{subfigure}{0.21\linewidth}
		\includegraphics[width=\linewidth]{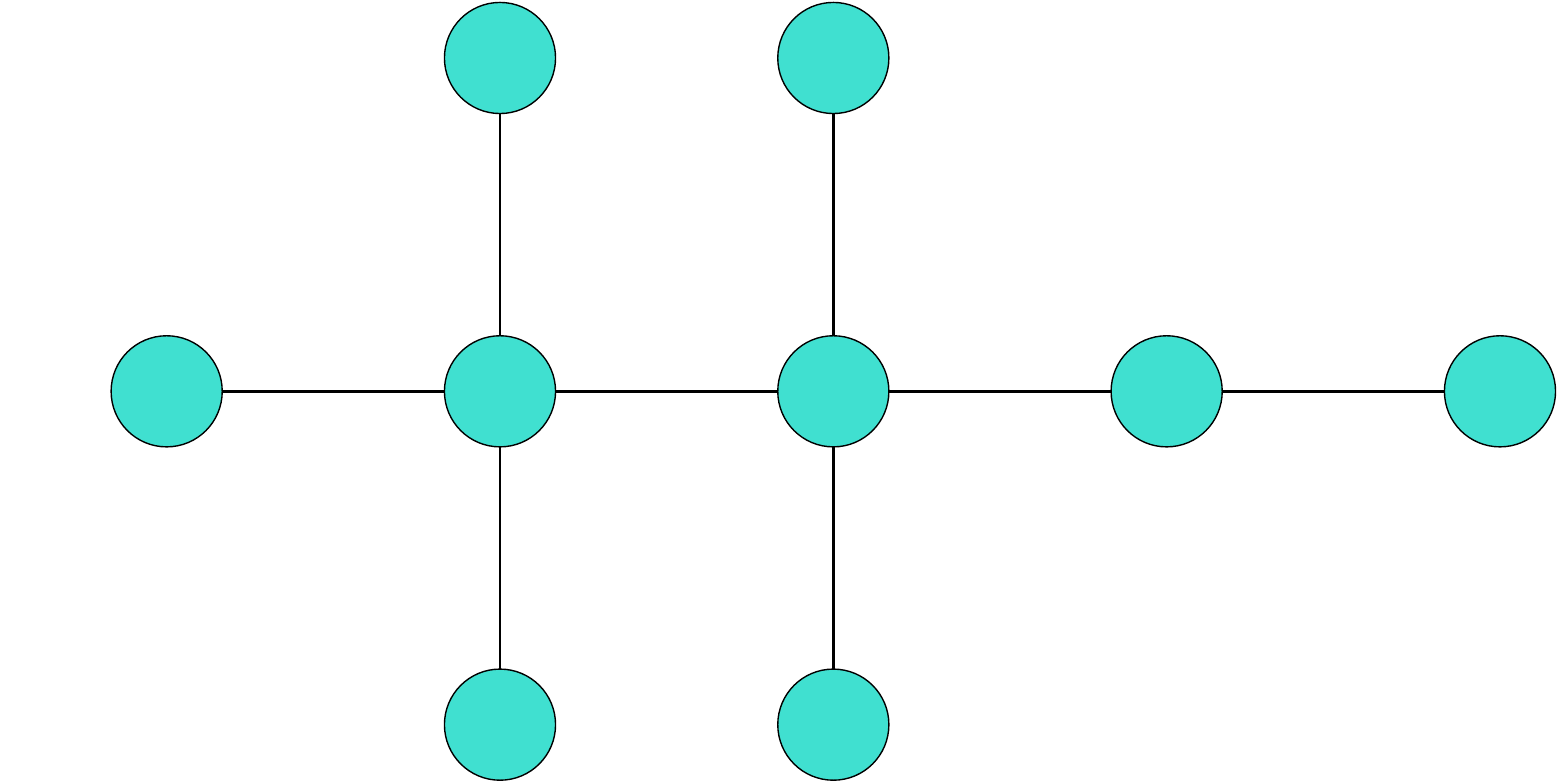}
		\subcaption{Initial colors}
	\end{subfigure}\hfill
	\begin{subfigure}{0.21\linewidth}
		\includegraphics[width=\linewidth]{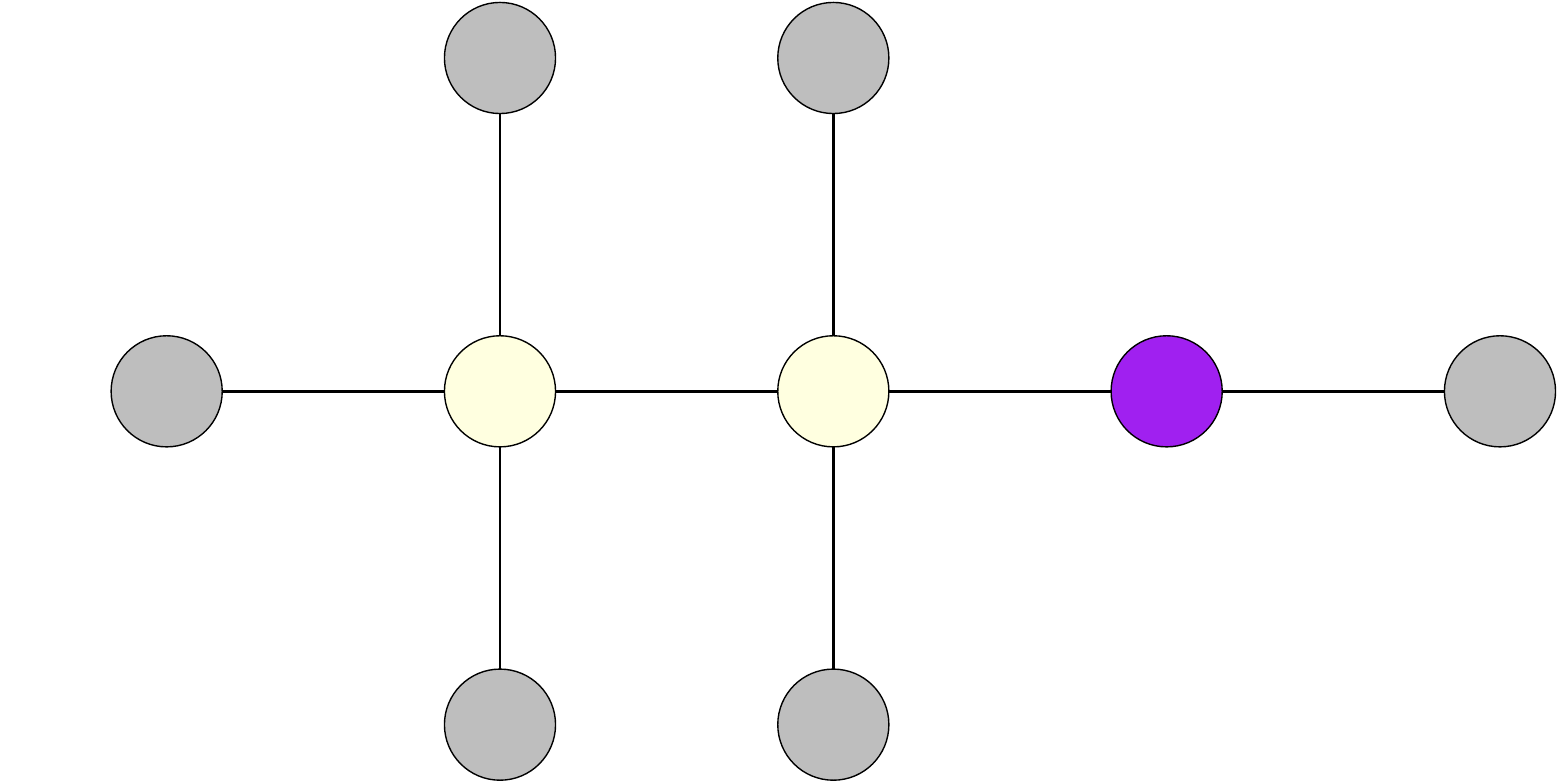}
		\subcaption{Iteration $1$}
	\end{subfigure}\hfill
	\begin{subfigure}{0.21\linewidth}
		\includegraphics[width=\linewidth]{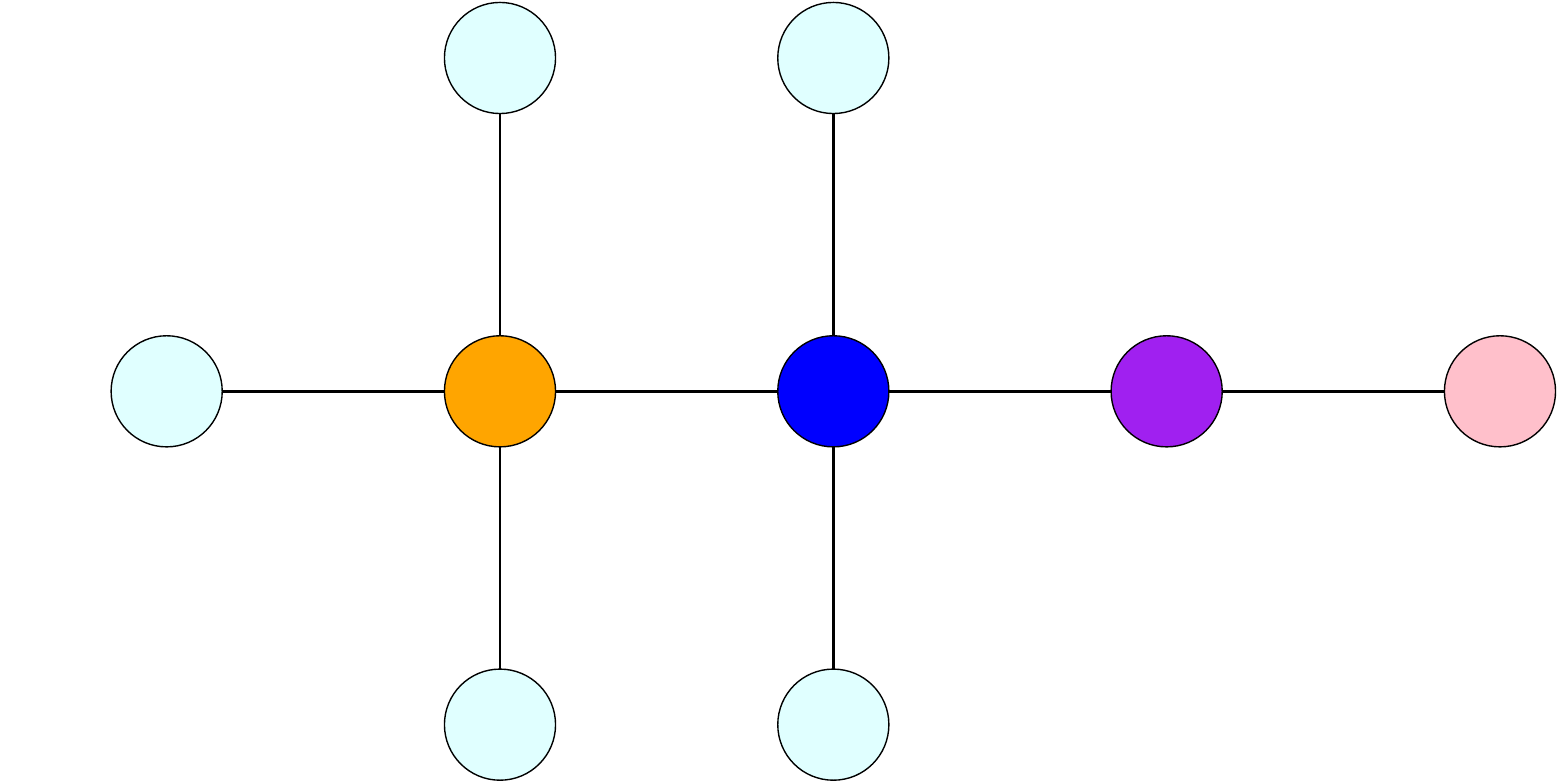}
		\subcaption{Iteration $2$}
	\end{subfigure}\hfill
	\begin{subfigure}{0.21\linewidth}
		\includegraphics[width=\linewidth]{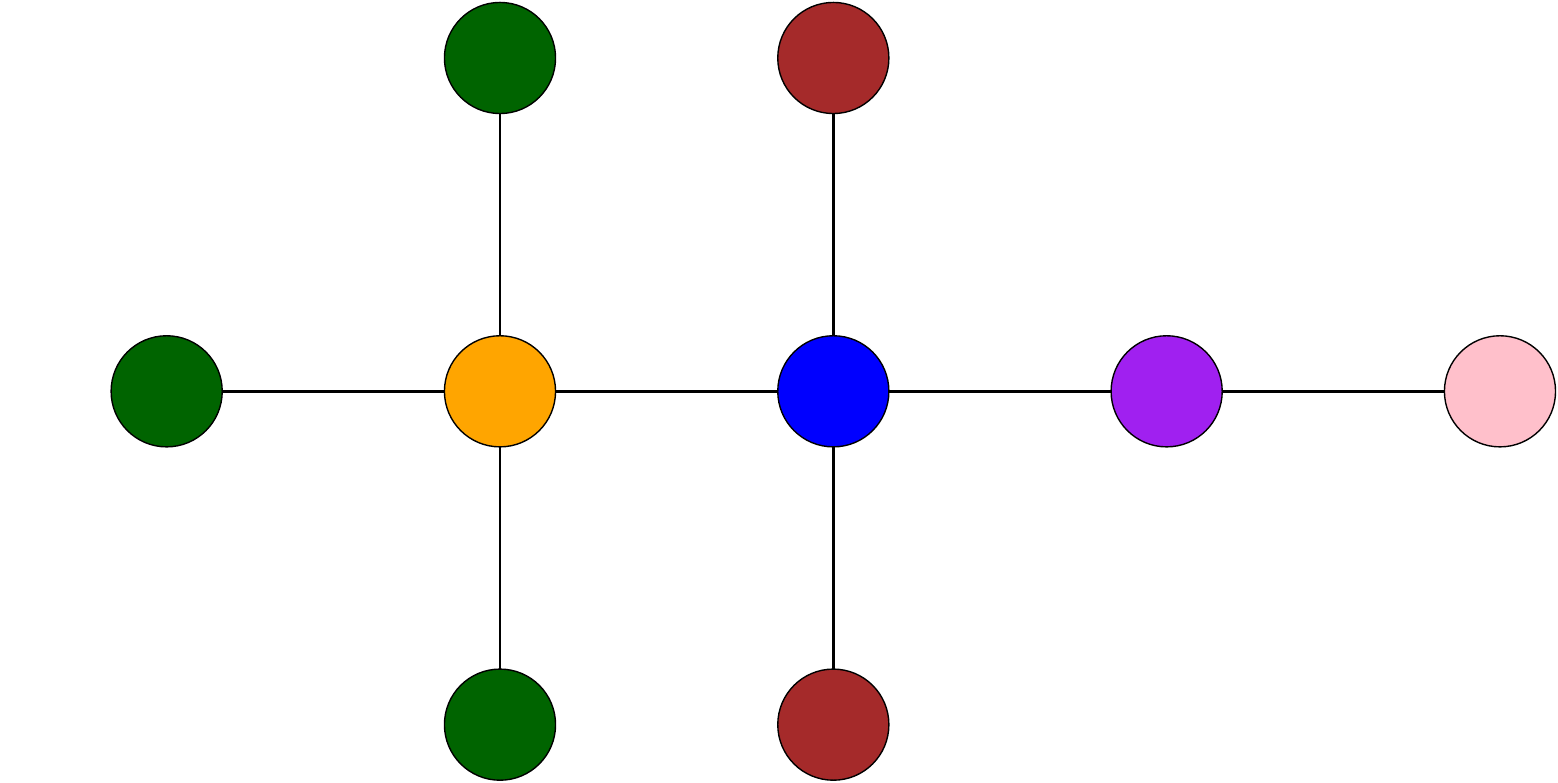}
		\subcaption{Iteration $3$}
	\end{subfigure}\hfill
	\begin{subfigure}{0.12\linewidth}
		\includegraphics[width=\linewidth]{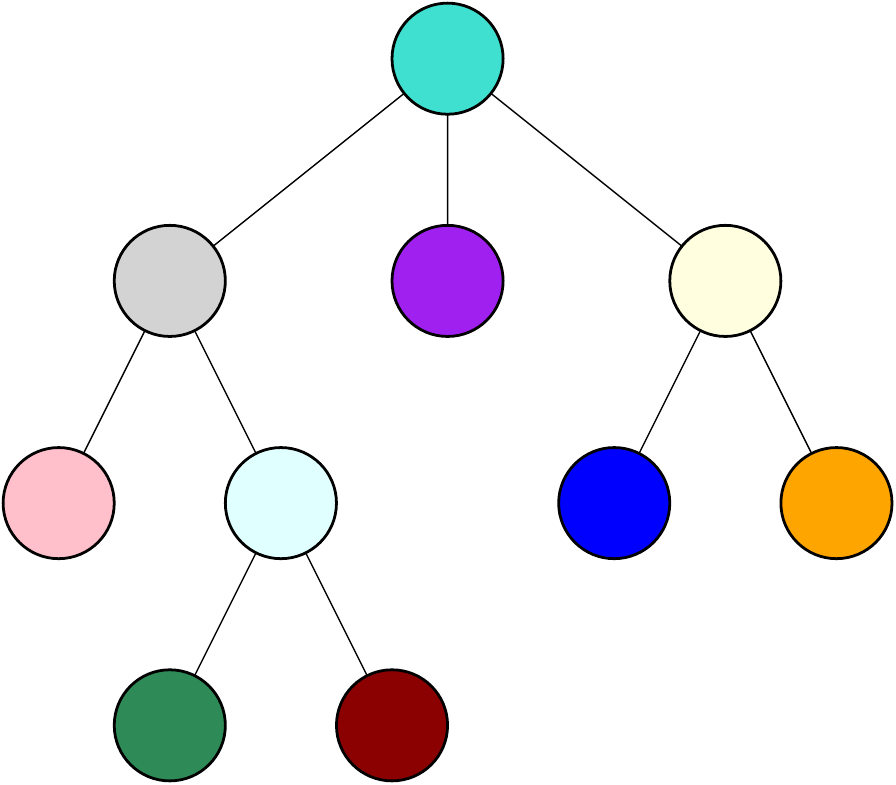}
		\subcaption{Hierarchy}
	\end{subfigure}

	\caption{Initial coloring and results of the first three iterations of the Weisfeiler-Leman algorithm. To use less colors for this example, vertices with a unique color do not get a new color. The color hierarchy shows the development of the colors over the refinement iterations.}
	\label{fig:wlex}
\end{figure}

After enough iterations the number of different colors will no longer change and this resulting coloring is called the \textit{coarsest stable coloring}. The coarsest stable coloring is unique and always reached after at most $|V(G)|-1$ iterations. 
This trivial upper bound on the number of iterations is tight~\cite{KieferM20}.
In practice, however, Weisfeiler-Leman refinement converges much faster (see Appendix~\ref{subsec:WLit}).

\paragraph{Graph Kernels and the Weisfeiler-Leman Subtree Kernel}
A \textit{kernel} on $X$ is a function $ k\colon X \times X \rightarrow \mathbb{R}$, so that there exist a Hilbert space $\mathcal{H}$ and a mapping $\phi\colon X \rightarrow \mathcal{H}$ with $k(x, y) = \langle \phi(x), \phi(y)\rangle$ for all $x$, $y$ in $X$, where $\langle\cdot,\cdot \rangle$ is the inner product of $\mathcal{H}$.
A \textit{graph kernel} is a kernel on graphs, i.e., $X$ is the set of all graphs.

The Weisfeiler-Leman subtree kernel~\cite{wlkernels} with height $h$ is defined as
\begin{equation}
	\label{eq:wlkernel}
	k_{ST}^h (G_1, G_2 ) = \sum_{i = 0}^h \sum_{u\in V(G_1)}\sum_{v\in V(G_2)} \delta(c_i(u), c_i(v)),
\end{equation}
where $\delta$ is the Dirac kernel (1, iff $c_i(u)$ and $c_i(v)$ are equal, and 0 otherwise).
It counts the number of vertices with common colors in the two graphs up to the given bound on the number of Weisfeiler-Leman iterations.

\section{Gradual Weisfeiler-Leman Refinement}
\label{sec:refinement}

As a different approach to the refinement step of the Weisfeiler-Leman algorithm, we essentially replace the injective relabeling function with a non-injective one. We do this by allowing vertices with differing neighbor color multisets to be assigned the same color under some conditions. Through this, the number of colors per iteration can be limited, allowing to obtain a more gradual refinement of colors. To reach the same stable coloring as the original Weisfeiler-Leman algorithm, the function has to assure that vertices with differing colors in one iteration will get differing colors in future iterations and that in each iteration at least one color is split up, if possible.

We first define the property necessary to reach the stable coloring of the original Weisfeiler-Leman algorithm and discuss connections to the original as well as other vertex refinement algorithms. Then we provide a realization of such a function by means of clustering, which is used in our experimental evaluation.
Figure~\ref{fig:gwlex} illustrates our idea. It depicts the initial coloring, the result of the first iteration of WL and a possible result of the first iteration of the gradual Weisfeiler-Leman refinement (GWL), when restricting the maximum number of new colors to two by clustering the neighbor color multisets.

\paragraph{Update Functions}
Using the same approach as the Weisfeiler-Leman algorithm, the color of a vertex is updated iteratively according to the colors of its neighbors. Let $\mathcal{T}_i$ denote a color hierarchy belonging to $G$ and $n_i(v)=\{\!\!\{\pi_{\mathcal{T}_{i}}(x)\mid x \in N(v)\}\!\!\}$ the neighbor color multiset of $v$ in iteration $i$. We use a similar update strategy, but generalize it using a special type of function:
$$\forall v \in V(G)\colon c_{i+1}(v) = \pi_{\mathcal{T}_{i+1}}(v),
\text{with } \mathcal{T}_{i+1}=f(G,\mathcal{T}_{i}),$$ where $f$ is a refining, neighborhood preserving function.

A \emph{refining, neighborhood preserving (renep)} function $f$ maps a pair $(G,\mathcal{T}_{i})$ to a tree $\mathcal{T}_{i+1}$, such that
\vspace{-0.5cm} 
\begin{condition}\label{c1}$\mathcal{T}_{i} \subseteq \mathcal{T}_{i+1}$
\end{condition}
\vspace{-0.8cm} 
\begin{condition}\label{c2}$\mathcal{T}_{i} = \mathcal{T}_{i+1},$ iff $\forall v,w \in V(G): v \approx_{\pi_{\mathcal{T}_{i}}} w \Rightarrow n_i(v) = n_i(w)$
\end{condition} 
\vspace{-0.8cm} 
\begin{condition}\label{c3}$\mathcal{T}_{i} \subsetneq \mathcal{T}_{i+1} \Rightarrow$ $\pi_{\mathcal{T}_{i+1}} \prec \pi_{\mathcal{T}_{i}}$
\end{condition} 
\vspace{-0.8cm} 
\begin{condition}\label{c4}$\forall v,w \in V(G)\colon (v \approx_{\pi_{\mathcal{T}_{i}}} w \wedge n_{i}(v) = n_{i}(w)) \Rightarrow v \approx_{\pi_{\mathcal{T}_{i+1}}} w$
\end{condition}

The conditions assure, that the coloring $\pi_{\mathcal{T}_{i+1}}$ is a strict refinement of $\pi_{\mathcal{T}_{i}}$, if there exists a strict refinement:
Condition~\ref{c1} assures that the new coloring is a refinement of the old one. Condition~\ref{c2} assures that the tree (and in turn the coloring) only stays the same, iff the stable coloring is reached, while Condition~\ref{c3} assures that, if the trees are not equal, $\pi_{\mathcal{T}_{i+1}}$ is a strict refinement of $\pi_{\mathcal{T}_{i}}$. Without this condition it would be possible to obtain a tree, that fulfills Condition~\ref{c1} but does not strictly refine the coloring (for example by adding one child to each leaf).
Condition~\ref{c4} assures that vertices, that are indistinguishable regarding their color and their neighbor color multiset, get the same color (as in the original Weisfeiler-Leman algorithm).

We call this new approach \textit{gradual Weisfeiler-Leman refinement} (GWL refinement).
Since $f$ is a renep function, it is assured that at least one color is split into at least two new colors, if the stable coloring is not yet reached. This property and its implications are explored in the following section.

Usually, the refinement is computed simultaneously for multiple graphs. This can be realized by using the disjoint union of all graphs as input. Note that this will have an influence on the function $f$, since refinements might differ based on the vertices involved.
This is a typical case of transductive learning, because the algorithm has to run on all graphs and if a new graph is encountered, the algorithm has to run again on the enlarged graph. To counteract this, one could only run the algorithm on the training set and keep track of the coloring choices. Then for new graphs, the vertices are colored with the most similar colors.

\begin{figure}[t]
	\centering
	\begin{subfigure}{0.18\linewidth}
		\includegraphics[height=0.1\textheight]{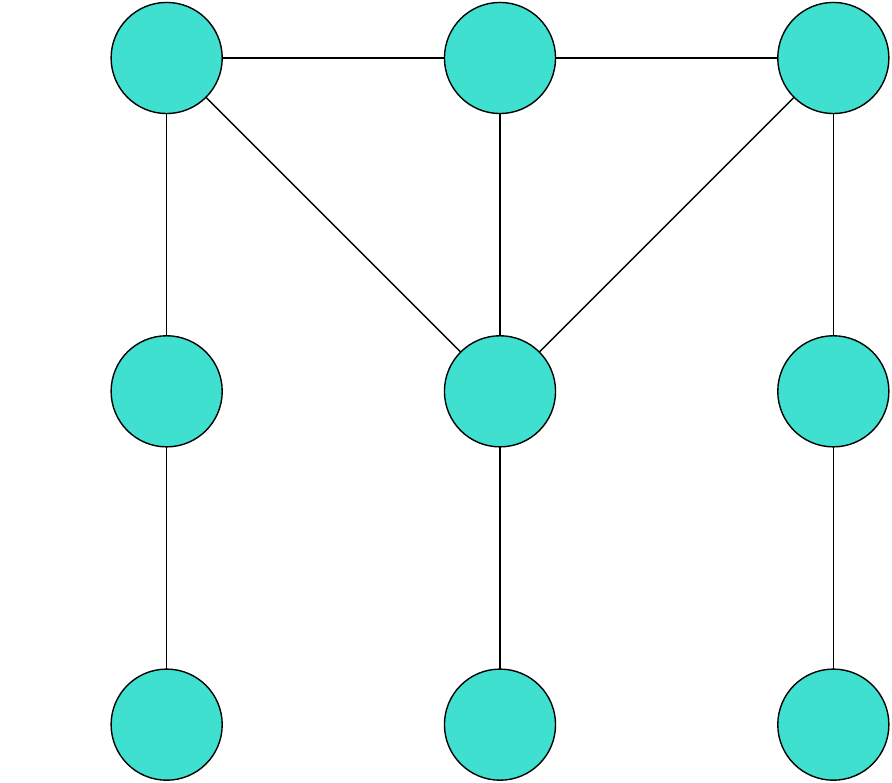}
		\subcaption{Initial colors}
	\end{subfigure}
	\begin{subfigure}{0.235\linewidth}
		\includegraphics[height=0.1\textheight]{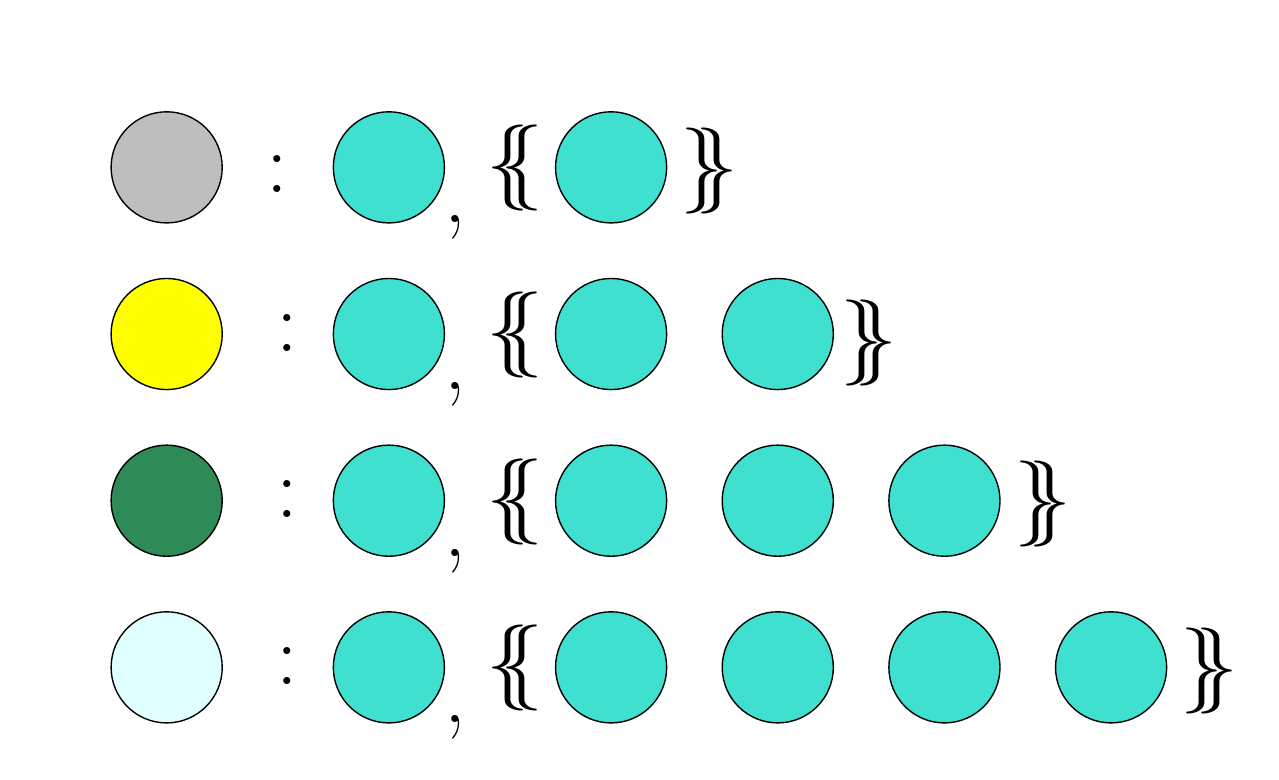}
		\subcaption{Neighbor colors}
	\end{subfigure}
	\begin{subfigure}{0.18\linewidth}
		\includegraphics[height=0.1\textheight]{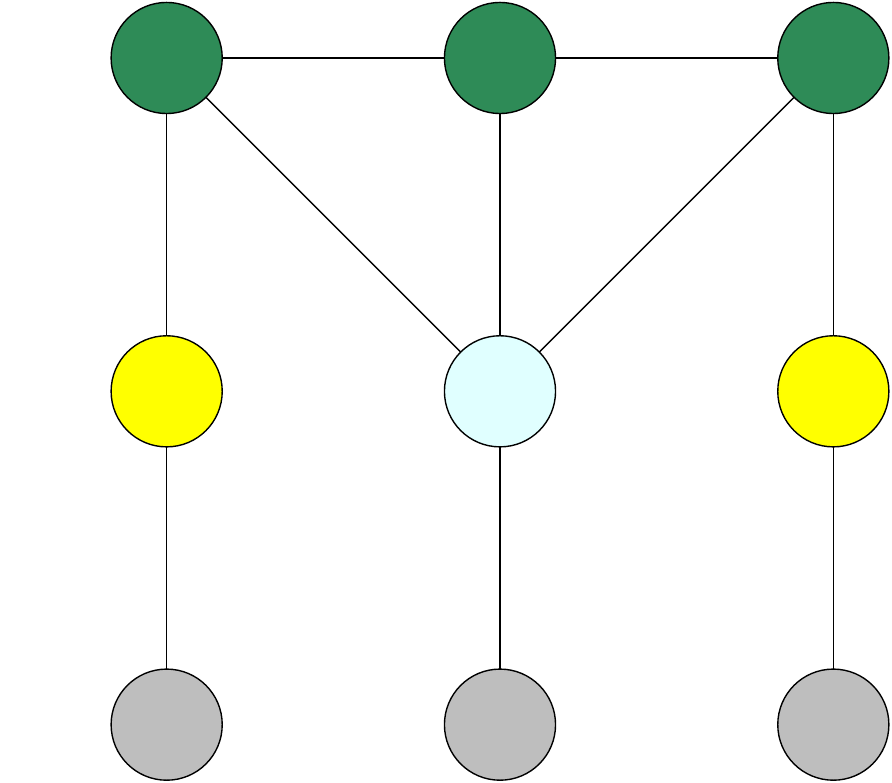}
		\subcaption{Result of WL}
	\end{subfigure}
	\begin{subfigure}{0.18\linewidth}
		\includegraphics[height=0.1\textheight]{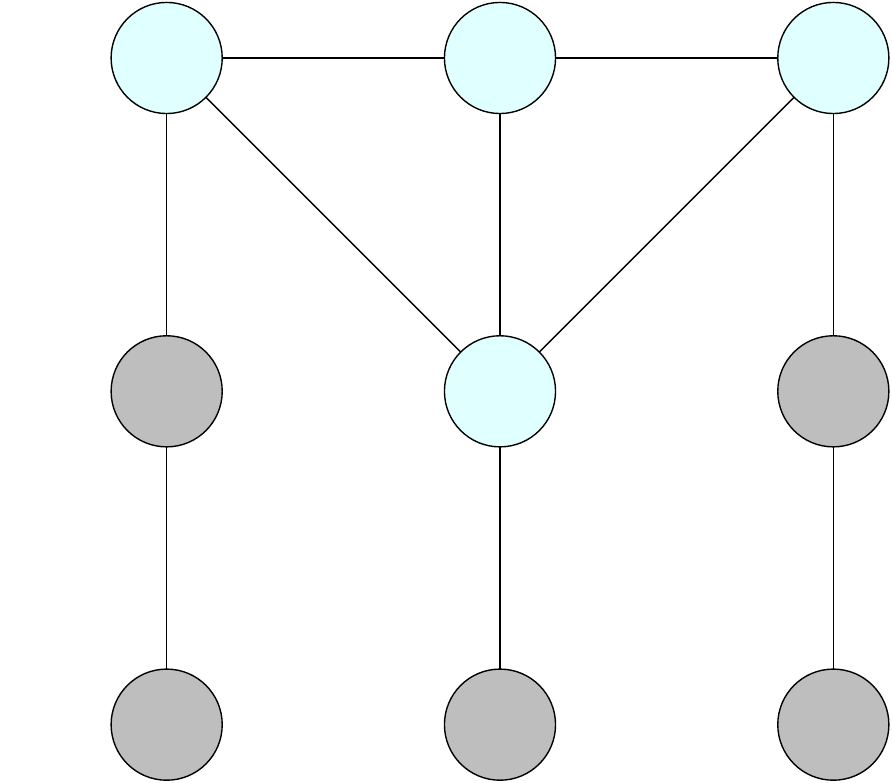}
		\subcaption{Result of GWL}
	\end{subfigure}
	\begin{subfigure}{0.18\linewidth}
		\includegraphics[height=0.1\textheight]{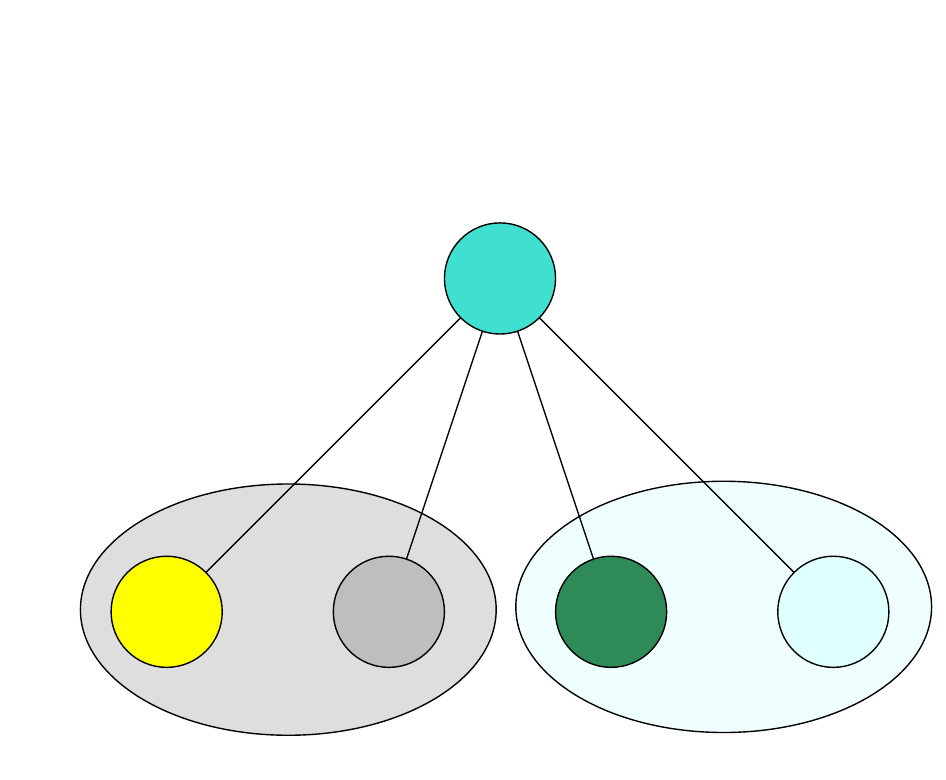}
		\subcaption{Hierarchy}
	\end{subfigure}
	
	\caption{Initial coloring and results of the first iteration using WL and GWL refinement. We assume that the update function of GWL is a clustering algorithm, producing two clusters per old color. Vertices colored gray and yellow by WL are put into the same cluster, as well as green and light blue ones, as their neighbor color multisets only differ by one element each.}
	\label{fig:gwlex}
\end{figure}

\subsection{Equivalence of the Stable Colorings}
The gradual color refinement will never assign two vertices the same color, if their colors differed in the previous iteration, since we require the coloring to be a refinement of the previous one.
We can show that the stable coloring obtained by GWL refinement using any renep function is equal to the unique coarsest stable coloring, which is obtained by the original Weisfeiler-Leman algorithm.
\begin{theorem}[\cite{BerkholzBG17}, Proposition 3]\label{th:cited}
	For every coloring $\pi$ of $V(G)$, there is a unique coarsest stable coloring $p$ that \emph{refines} $\pi$.
\end{theorem}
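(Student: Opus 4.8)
The plan is to read the statement as a fixed-point result about the one-step Weisfeiler-Leman refinement operator on the lattice of colorings of $V(G)$ ordered by $\preccurlyeq$. Write $R$ for the operator sending a coloring $c$ to the coloring $c'$ with $c'(v) = z(c(v),\{\!\!\{c(u)\mid u \in N(v)\}\!\!\})$ for the injective relabeling $z$, and identify colorings up to $\equiv$ so that $R$ acts on partitions. Call $c$ \emph{stable} when $R(c) \equiv c$, i.e.\ when $c$ is a fixed point of $R$; a coloring refining $\pi$ is exactly one with $c \preccurlyeq \pi$. Two properties of $R$ carry the whole argument. First, $R$ is \emph{refining}: since $c'(v)$ records $c(v)$ through the injective $z$, we always have $R(c) \preccurlyeq c$. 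Second, and crucially, $R$ is \emph{monotone}, meaning $c \preccurlyeq d \Rightarrow R(c) \preccurlyeq R(d)$.

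For existence I would iterate $R$ from $\pi$, setting $\pi_0 = \pi$ and $\pi_{i+1} = R(\pi_i)$. By the refining property this is a chain $\pi_0 \succcurlyeq \pi_1 \succcurlyeq \cdots$ of successively finer colorings, so the number of color classes is nondecreasing and bounded by $|V(G)|$. Hence it stabilizes: there is an index $i$ with $\pi_{i+1} \equiv \pi_i$, and this coloring $p := \pi_i$ satisfies $R(p) \equiv p$, so $p$ is stable, while $p \preccurlyeq \pi_0 = \pi$ shows that $p$ refines $\pi$.

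For the coarsest-ness and uniqueness, let $q$ be an arbitrary stable coloring refining $\pi$, so $q \preccurlyeq \pi_0$ and $R(q) \equiv q$. Applying monotonicity and stability in alternation gives $q \equiv R(q) \preccurlyeq R(\pi_0) = \pi_1$, and inductively $q \preccurlyeq \pi_i$ for every $i$; in particular $q \preccurlyeq p$. Thus every stable refinement of $\pi$ is itself a refinement of $p$, which is precisely the statement that $p$ is the coarsest such coloring. Uniqueness is then immediate: a second coarsest stable coloring $p'$ would satisfy both $p' \preccurlyeq p$ and $p \preccurlyeq p'$, hence $p \equiv p'$.

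The only nonroutine ingredient is the monotonicity of $R$, and that is where I expect the real work to be. Assuming $c \preccurlyeq d$ and $R(c)(u) = R(c)(v)$, injectivity of $z$ forces $c(u) = c(v)$ together with $\{\!\!\{c(w)\mid w \in N(u)\}\!\!\} = \{\!\!\{c(w)\mid w \in N(v)\}\!\!\}$. The first equality yields $d(u) = d(v)$ directly from the defining implication of $c \preccurlyeq d$. For the second I would fix a color-preserving bijection $N(u) \to N(v)$ witnessing the equality of the $c$-multisets and push it through the same implication $c(\cdot) = c(\cdot) \Rightarrow d(\cdot) = d(\cdot)$, obtaining equality of the corresponding $d$-multisets and hence $R(d)(u) = R(d)(v)$. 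This is exactly $R(c) \preccurlyeq R(d)$, and once it is in hand the finiteness and induction steps above are pure bookkeeping.
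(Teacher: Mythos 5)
Your proof is correct, but there is nothing in the paper to compare it against: the paper does not prove this statement at all. It is imported as a cited result (Proposition~3 of Berkholz, Bonsma and Grohe, as the theorem header indicates) and used as a black box to conclude, together with Theorem~2, that GWL reaches the same stable coloring as WL. What you supply is therefore a self-contained replacement for the citation, namely the standard order-theoretic argument: the one-step WL operator $R$ is refining ($R(c)\preccurlyeq c$, by injectivity of $z$) and monotone ($c\preccurlyeq d\Rightarrow R(c)\preccurlyeq R(d)$); existence of a stable refinement of $\pi$ follows by iterating $R$ and counting color classes, and coarsest-ness plus uniqueness follow by running an arbitrary stable refinement $q$ of $\pi$ alongside the iteration, using monotonicity and $q\equiv R(q)$ to get $q\preccurlyeq\pi_i$ for all $i$. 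All steps check out; in particular the monotonicity argument, lifting a $c$-color-preserving bijection $N(u)\to N(v)$ to a $d$-color-preserving one via the implication defining $c\preccurlyeq d$, is exactly the crux and is handled correctly. Two small points would be worth making explicit in a write-up: first, $R$ descends to partitions (respects $\equiv$), which follows from monotonicity applied in both directions, so your identification ``up to $\equiv$'' is legitimate; second, stability in the paper's sense (the number of colors no longer changes) coincides with your fixed-point condition $R(c)\equiv c$ precisely because $R$ is refining, so a refinement with the same number of classes is the same partition. Neither is a substantive gap.
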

This means GWL with any renep function, should it reach a coarsest stable coloring, will reach this unique coarsest stable coloring. It remains to show that GWL will reach a coarsest stable coloring.
\begin{theorem}\label{th:req}
	For all $G$ the GWL refinement using any renep function will find the unique coarsest stable coloring of $V(G)$.
\end{theorem}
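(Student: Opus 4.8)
The plan is to establish two facts: (i) the GWL refinement halts after finitely many iterations with a \emph{stable} coloring, and (ii) this terminal coloring coincides with the unique coarsest stable coloring $p$ guaranteed by \Cref{th:cited}. Fact (i) is where \Cref{c2,c3} do the work; fact (ii) is where \Cref{c1,c4} do the work, and it is the genuinely delicate part.

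For termination, I would argue that the number of distinct colors of $\pi_{\mathcal{T}_i}$ is a positive integer bounded by $|V(G)|$. By \Cref{c1} we always have $\mathcal{T}_i \subseteq \mathcal{T}_{i+1}$, so at each iteration either $\mathcal{T}_{i+1}=\mathcal{T}_i$, in which case the process has converged, or $\mathcal{T}_i \subsetneq \mathcal{T}_{i+1}$, in which case \Cref{c3} forces $\pi_{\mathcal{T}_{i+1}} \prec \pi_{\mathcal{T}_i}$, i.e., the number of colors strictly increases. Starting from the uniform coloring $\pi_0$ with a single color, at most $|V(G)|-1$ strict refinements can occur, so some iteration $m$ satisfies $\mathcal{T}_{m+1}=\mathcal{T}_m$. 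By \Cref{c2}, this convergence holds iff every pair $v \approx_{\pi_{\mathcal{T}_m}} w$ has $n_m(v)=n_m(w)$, which is exactly the defining property of a stable coloring; hence $\pi_{\mathcal{T}_m}$ is stable. Moreover, iterating \Cref{c1} gives $\mathcal{T}_0 \subseteq \dots \subseteq \mathcal{T}_m$, so $\pi_{\mathcal{T}_m} \preccurlyeq \pi_0$, i.e., $\pi_{\mathcal{T}_m}$ refines the initial coloring.

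The heart of the argument is the invariant that GWL never refines \emph{past} the coarsest stable coloring: I claim $p \preccurlyeq \pi_{\mathcal{T}_i}$ for every $i$, proved by induction on $i$. The base case $p \preccurlyeq \pi_0$ is immediate, since $p$ refines $\pi_0$ by \Cref{th:cited}. For the step, assume $p \preccurlyeq \pi_{\mathcal{T}_i}$ and take $v \approx_p w$; I must show $v \approx_{\pi_{\mathcal{T}_{i+1}}} w$. From $p \preccurlyeq \pi_{\mathcal{T}_i}$ there is a well-defined projection $g$ with $\pi_{\mathcal{T}_i}(x)=g(p(x))$ for all $x$ (well-defined precisely because equal $p$-colors force equal $\pi_{\mathcal{T}_i}$-colors), and in particular $v \approx_{\pi_{\mathcal{T}_i}} w$. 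Since $p$ is stable and $v \approx_p w$, the $p$-neighbor multisets of $v$ and $w$ agree; applying $g$ elementwise to these multisets yields $n_i(v)=n_i(w)$. Thus $v \approx_{\pi_{\mathcal{T}_i}} w$ and $n_i(v)=n_i(w)$ hold simultaneously, and \Cref{c4} delivers $v \approx_{\pi_{\mathcal{T}_{i+1}}} w$, completing the induction.

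Combining the pieces finishes the proof: $\pi_{\mathcal{T}_m}$ is a stable coloring refining $\pi_0$ with $p \preccurlyeq \pi_{\mathcal{T}_m}$, i.e., $\pi_{\mathcal{T}_m}$ is at least as coarse as $p$. Since $p$ is the coarsest stable coloring refining $\pi_0$, no stable coloring can be strictly coarser than $p$, which forces $\pi_{\mathcal{T}_m} \equiv p$. I expect the main obstacle to be the inductive step, specifically the clean passage from ``equal $p$-neighbor multisets'' to ``equal $\pi_{\mathcal{T}_i}$-neighbor multisets'' via the projection $g$; this is the point that makes \Cref{c4} applicable and where the renep conditions must be invoked exactly as stated. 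The termination bound and the stability of the terminal coloring are comparatively routine consequences of \Cref{c2,c3}.
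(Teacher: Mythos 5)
Your proposal is correct, and its mathematical engine is the same one the paper relies on: Condition~\ref{c4} is what prevents GWL from ever splitting a color class that a stable coloring keeps together. The organization, however, is genuinely different. The paper argues by contradiction: it takes the GWL-stable coloring $\pi_{\mathcal{T}}$ as given, assumes a stable coloring $\pi'$ with $\pi_{\mathcal{T}} \prec \pi' \preccurlyeq \pi_0$, picks a pair $v \approx_{\pi'} w$ with $v \not\approx_{\pi_{\mathcal{T}}} w$, and invokes Condition~\ref{c4} to conclude $n(v) \neq n(w)$, contradicting stability of $\pi'$. That proof is very terse: it does not say at which iteration Condition~\ref{c4} is applied (one needs the first iteration $i$ at which $v$ and $w$ become separated), it silently uses that $\pi'$ still refines the intermediate coloring $\pi_{\mathcal{T}_i}$ at that point (which itself needs a small induction), and it does not spell out why differing $\pi_{\mathcal{T}_i}$-neighbor multisets force differing $\pi'$-neighbor multisets. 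Your direct induction --- the invariant $p \preccurlyeq \pi_{\mathcal{T}_i}$ together with the projection $g$ carrying $p$-neighbor multisets onto $\pi_{\mathcal{T}_i}$-neighbor multisets --- is exactly this missing bookkeeping, written in contrapositive-free form with the roles of $\pi'$ and $p$ merged. You also prove termination from Conditions~\ref{c1}--\ref{c3}, which the paper's proof takes for granted (its iteration bound is established only in the theorem that follows). In short: the paper's route is shorter and delegates the quantitative part to a separate statement, while yours is self-contained, makes the two uses of stability (of $p$ and of the terminal coloring) explicit, and is the version one would want if the renep conditions were ever weakened, since it isolates precisely where each condition enters.
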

\begin{proof}
	Let $\pi_\mathcal{T}=\{p_1, \dots, p_n\} $ be the stable coloring obtained from GWL on the initial coloring $\pi_0$. Assume there exists another stable coloring $\pi'=\{p'_1,\dots, p'_m\}$ with $\pi_\mathcal{T} \prec \pi' \preccurlyeq \pi_0$, so $m < n$. Then $\exists v,w \in V(G): (v \approx_{\pi'} w \wedge v \not\approx_{\pi_{\mathcal{T}}} w)$ and since Condition~\ref{c4} applies $n(v) \neq n(w)$, which contradicts the assumption that $\pi'$ is stable.
\end{proof}

The original Weisfeiler-Leman refinement can be realized by using the renep function with $\Leftrightarrow$ instead of $\Rightarrow$ in Condition~\ref{c4}. This ensures that vertices get assigned the same color, iff they previously had the same color and their neighborhood color multisets do not differ. Since this procedure splits all colors, that can be split up, it is the fastest converging possible renep function (because only direct neighborhood is considered).
A trivial upper bound for the maximum number of Weisfeiler-Leman iterations needed is $|{V(G)}|-1$ and there are infinitely many graphs on which this number of iterations is required for convergence~\cite{KieferM20}. 
We obtain the same upper bound for GWL.

\begin{theorem}
	The maximum number of iterations needed to reach the stable coloring using GWL refinement is $|{V(G)}|-1$.
\end{theorem}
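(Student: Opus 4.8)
The plan is to track the number of distinct colors produced over the course of the refinement and to argue that this number strictly increases in every iteration that does not yet reach stability; since it is bounded by $|V(G)|$, only $|V(G)|-1$ such increases are possible.

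First I would let $k_i$ denote the number of distinct colors assigned by $\pi_{\mathcal{T}_i}$ (equivalently, the number of leaves of the color hierarchy $\mathcal{T}_i$). In the worst case the initial coloring is uniform, so $k_0 = 1$, and since a coloring of $V(G)$ cannot use more colors than there are vertices we always have $k_i \le |V(G)|$. (If the initial coloring is already labelled with $k_0 > 1$ colors the same argument gives the sharper bound $|V(G)| - k_0$, which only strengthens the conclusion.)

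The key step is to show that a single GWL iteration strictly increases $k_i$ whenever the coarsest stable coloring has not yet been reached. Suppose $\pi_{\mathcal{T}_i}$ is not stable, i.e.\ there exist $v,w \in V(G)$ with $v \approx_{\pi_{\mathcal{T}_i}} w$ but $n_i(v) \neq n_i(w)$. By Condition~\ref{c2} this rules out $\mathcal{T}_i = \mathcal{T}_{i+1}$, and combined with Condition~\ref{c1} ($\mathcal{T}_i \subseteq \mathcal{T}_{i+1}$) we obtain $\mathcal{T}_i \subsetneq \mathcal{T}_{i+1}$. Condition~\ref{c3} then yields the strict refinement $\pi_{\mathcal{T}_{i+1}} \prec \pi_{\mathcal{T}_i}$. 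From the definitions of $\preccurlyeq$ and $\prec$, a refinement never merges color classes and a strict refinement properly subdivides at least one of them, so $k_{i+1} \ge k_i + 1$.

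Putting these together finishes the proof: starting from $k_0 = 1$ and increasing by at least one in every iteration that does not reach a stable coloring, after $t$ non-terminating iterations we have $k_t \ge 1 + t$; since $k_t \le |V(G)|$ this forces $t \le |V(G)| - 1$. Hence GWL reaches the stable coloring after at most $|V(G)| - 1$ iterations. For tightness, I would invoke that the WL variant obtained by using $\Leftrightarrow$ in place of $\Rightarrow$ in Condition~\ref{c4} is itself a renep function, so the graph family of \citet{KieferM20} requiring $|V(G)|-1$ WL iterations witnesses that the bound is attained. I do not expect a serious obstacle here; the only point that warrants care is the implication \emph{strict refinement $\Rightarrow$ strictly more color classes}, which must be read off the definitions rather than assumed, everything else being a direct counting argument.
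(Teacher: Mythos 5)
Your proof is correct and takes essentially the same route as the paper's: both argue that, as long as the coloring is not stable, the renep conditions force at least one color class to split, so the number of colors strictly increases in every iteration while being bounded by $|V(G)|$. Your write-up is merely more explicit, deriving the strict split formally from Conditions~\ref{c1}--\ref{c3} and adding the tightness witness via the WL instantiation and the graphs of \citet{KieferM20}, which the paper handles in the surrounding discussion rather than in the proof itself.
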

\begin{proof}
	The function we consider is a renep function. It follows that, prior to reaching the stable coloring, at least one color is split into at least two new colors in every iteration. Since vertices that had different colors at any step will also have different colors in the following iterations, the number of colors increases in every step. Hence, after at most $|{V(G)}|-1$ steps, each vertex has a unique color, which is a stable coloring.
\end{proof}

\paragraph{Sequential Weisfeiler-Leman}
For optimizing the running time of the Weisfeiler-Leman algorithm, sequential refinement strategies have been proposed~\cite{PaigeT87,JunttilaK07,BerkholzBG17}, which lead to the same stable coloring as the original WL. Our presentation follows \citet{BerkholzBG17}, who provide implementation details and a thorough complexity analysis.
Sequential WL manages a stack containing the colors that still have to be processed. All initial colors are added to this stack. In each step, the next color $c$ from the stack is used to refine the current coloring $\pi$ (and generate a new coloring $\pi'$) using the following update strategy: $\forall v,w \in V(G)\colon v \approx_{\pi'} w  \Leftrightarrow |{\{\!\!\{x\mid x \in N(v) \wedge \pi(x) = c\}\!\!\}}| =|{\{\!\!\{x\mid x \in N(w) \wedge \pi(x) = c\}\!\!\}}| \wedge v \approx_{\pi} w$. Note that $\pi' \prec \pi$ is not guaranteed.
For colors that are split, all new colors are added to the stack with exception of the largest color class. This is shown to be sufficient for generating the coarsest stable coloring~\cite{BerkholzBG17}.

Sequential Weisfeiler-Leman can be realized by our GWL with the restriction, that in sequential WL, some refinement operations might not produce strict refinements. We need to skip these in our approach (since renep functions have to produce strict refinements as long as the coloring is not stable). The renep function has to fulfill $\forall v,w \in V(G)\colon v \approx_{\pi_{\mathcal{T}_{i+1}}} w  \Leftrightarrow |{\{\!\!\{x\mid x \in N(v) \wedge \pi_{\mathcal{T}_{i}}(x) = c\}\!\!\}}| =|{\{\!\!\{x\mid x \in N(w) \wedge \pi_{\mathcal{T}_{i}}(x) = c\}\!\!\}}| \wedge v \approx_{\pi_{\mathcal{T}_{i}}} w,$ where $c$ is the next color in the stack that produces a strict refinement.

\subsection{Running Time}
The running time of the gradual Weisfeiler-Leman refinement depends on the cost of the update function used.
\begin{theorem}
	The running time for the gradual Weisfeiler-Leman refinement is $O( h \cdot t_u( \, |V(G)|)) ,$ where $h$ is the number of iterations and $t_u(n)$ is the time needed to compute the renep function for $n$ elements.
\end{theorem}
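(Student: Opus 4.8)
The plan is to decompose the cost of GWL refinement iteration by iteration and show that each iteration is dominated by a single evaluation of the renep function. The algorithm is governed by the update rule $\mathcal{T}_{i+1}=f(G,\mathcal{T}_{i})$, so it performs exactly $h$ such evaluations, one per iteration. Hence it suffices to bound the work of a single iteration by $O(t_u(|V(G)|))$ and then multiply by $h$.

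First I would itemize what happens in one iteration. Given the current color hierarchy $\mathcal{T}_{i}$, the algorithm (i) reads off the coloring $\pi_{\mathcal{T}_{i}}$ from the leaves, (ii) forms the neighbor color multisets $n_i(v)=\{\!\!\{\pi_{\mathcal{T}_{i}}(x)\mid x \in N(v)\}\!\!\}$ for every $v \in V(G)$, and (iii) applies the renep function to obtain $\mathcal{T}_{i+1}$. Step (ii) can be carried out by a single pass over the adjacency structure, costing $O(|V(G)|+|E(G)|)$, while reading the current leaf coloring in step (i) is $O(|V(G)|)$. Because any renep function must at least inspect the graph and the per-vertex neighbor information in order to decide how to split colors, its cost $t_u(|V(G)|)$ is at least linear in the size of this input, i.e.\ $t_u(|V(G)|)=\Omega(|V(G)|+|E(G)|)$. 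Therefore steps (i) and (ii) are absorbed into $t_u(|V(G)|)$, and one iteration runs in $O(t_u(|V(G)|))$.

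Summing over the $h$ iterations then yields the claimed bound $O(h \cdot t_u(|V(G)|))$. The only point requiring care is the bookkeeping of the color hierarchy $\mathcal{T}_{i}$, which grows by at most $|V(G)|$ new leaves per step (Condition~\ref{c3}) and thus never exceeds size $O(h\cdot|V(G)|)$; since each iteration only appends children to existing leaves, extending the tree costs $O(|V(G)|)$ per iteration and does not dominate $t_u$. The main (and essentially the only) obstacle is to justify that the auxiliary per-iteration work genuinely fits inside $t_u(|V(G)|)$ rather than contributing a separate additive term — this hinges on the mild and natural assumption that evaluating a renep function is at least linear in its input, which holds for any function that actually reads the neighborhoods it refines on.
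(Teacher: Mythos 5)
Your proof is correct and takes essentially the same route as the paper, which states this theorem without an explicit proof because it follows directly from the algorithm's structure: $h$ iterations, each performing one evaluation of the renep function $f(G,\mathcal{T}_i)$ plus $O(|V(G)|)$ color-assignment bookkeeping. The only nuance is that, since the paper defines $f$ as taking the pair $(G,\mathcal{T}_i)$ as input, forming the neighbor color multisets is part of evaluating $f$ itself, so your separate steps (i)--(ii) and the assumption $t_u(n)=\Omega(|V(G)|+|E(G)|)$ are unnecessary; only the weaker and unavoidable assumption $t_u(n)=\Omega(n)$ (the function must at least assign a color to each of the $n$ vertices) is needed to absorb the per-iteration bookkeeping into $O(t_u(|V(G)|))$.
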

The update function used in the original Weisfeiler-Leman refinement can be computed in time $O( |{V(G)}| + |E(G)|)$ in the worst-case by sorting the neighbor color multisets using bucket sort~\cite{wlkernels}.

\subsection{Discussion of Suitable Update Functions}
\label{subsec:disfun}
The update function of the original Weisfeiler-Leman refinement provides a fast way to reach the stable coloring, but in machine learning tasks a more fine grained vertex similarity might be desirable. A suitable update function restricts the number of new colors to a manageable amount, while still fulfilling the requirements of a renep function.
Clustering the neighborhood multisets of the vertices, and letting the clusters imply the new colors, is an intuitive way to restrict the number of colors per iteration and assign similar neighborhoods the same new color.
We discuss how to realize a renep function using clustering.

Whether two vertices, that currently have the same color, will be assigned the same color in the next step, depends on two factors: If they have the same neighbor color multiset, they have to remain in one color group. If their neighbor color multisets differ, however, the renep function can decide to either separate them or not (provided any new colors are generated to fulfill Condition~\ref{c3}).
We propose clustering the neighbor color multisets separately for each old color and let the clusters imply new colors.
If a clustering function guarantees to produce at least two clusters for inputs with at least two distinct objects, we obtain a renep function. %

Although various clustering algorithms are available, we identified $k$-means as a convenient choice because of its efficiency and controllability of the number of clusters.
In order to apply $k$-means to multisets of colors, we represent them as (sparse) vectors, where each entry counts the number of neighbors with a specific color.
The above method using $k$-means clustering with $k>1$ satisfies the requirements of a renep function. Of course, if the number of elements to cluster is less than or equal to $k$, the clustering can be omitted and each element can be assigned its own cluster. The number of clusters in iteration $i$ is bounded by $|{L}|\cdot{k^i}$, since each color can split into at most $k$ new colors in each iteration and the initial coloring has at most $|{L}|$ colors.

\section{Applications}
\label{sec:applications}

The gradual Weisfeiler-Leman refinement provides a more fine-grained approach to capture vertex similarity, where two vertices are considered more similar, the longer it takes until they get assigned different colors. This makes the approach applicable not only to vertex classification, but also in graph kernels and as a vertex similarity measure for graph matching.
We further describe these possible applications in the following and evaluate them against the state-of-the-art methods in Section~\ref{sec:evaluation}.

\paragraph{Graph Kernels}
The idea of the GWL subtree kernel is essentially the same as the Weisfeiler-Leman subtree kernel~\cite{wlkernels}, but instead of using the original Weisfeiler-Leman algorithm, the GWL algorithm is used to generate the features. We use the definition given in Equation~\eqref{eq:wlkernel} replacing the Weisfeiler-Leman colorings with the coloring from the GWL algorithm.
The Weisfeiler-Leman optimal assignment kernel~\cite{KriegeGW16} is obtained from an optimal assignment between the vertices of two graphs regarding a vertex similarity defined by a color hierarchy.
We replace the Weisfeiler-Leman color hierarchy used originally by the one from our gradual refinement. We evaluate the performance of our newly proposed kernels in Section~\ref{sec:evaluation}. 

\paragraph{Tree Metrics for Approximating the Graph Edit Distance}
The graph edit distance, a general distance measure for graphs, is usually approximated due to its complexity. Many approximations find an optimal assignment between the vertices of the two graphs and derive a (sub-optimal) edit path from this assignment. Naturally, the cost of such an edit path is an upper bound for the graph edit distance.
An optimal assignment can be computed in linear time, if the underlying cost function is a tree metric~\cite{1_gedLinear}, which means an approximation of the graph edit distance can be found in linear time, given a tree metric on the vertices. The color hierarchy, see Figure~\ref{fig:wlex}, produced by the original Weisfeiler-Leman refinement, as well as our gradual variant, can be interpreted as such a tree metric. Lin~\cite{1_gedLinear} approximated the graph edit distance as described above. We can again replace the original color hierarchy by the one produced by our gradual Weisfeiler-Leman refinement. Appendix~\ref{app:gedapprox} includes a more detailed explanation on how to approximate the graph edit distance using assignments.
We evaluate this approximation of the graph edit distance regarding its accuracy in $k$nn-classification against the state-of-the-art and the original approach.

\section{Experimental Evaluation}
\label{sec:evaluation}
We evaluate the proposed approach regarding its applicability in graph kernels, as the gradual Weisfeiler-Leman subtree kernel (GWL) and the gradual Weisfeiler-Leman optimal assignment kernel (GWLOA), as well as its usefulness as a tree metric for approximating the graph edit distance. 
Specifically, we address the following research questions:
\begin{itemize}
	\item[\textbf{Q1}] Can our kernels compete with state-of-the-art methods regarding classification accuracy on real-world and synthetic datasets? 
	\item[\textbf{Q2}] Which refinement speed is appropriate and are there dataset-specific differences?
	\item[\textbf{Q3}] How do our kernels compare to the state-of-the-art methods in terms of running time?
	\item[\textbf{Q4}] Is the vertex similarity obtained from GWL refinement suitable for approximating the graph edit distance and solving learning problems with it?
\end{itemize}

We compare to the Weisfeiler-Leman subtree kernel (WLST)~\cite{wlkernels}, the Weisfeiler-Leman optimal assignment kernel (WLOA)~\cite{KriegeGW16}, as well as the approximation of the relaxed Weisfeiler-Leman subtree kernel (RWL*)~\cite{80_generalizedWLkernel}, and the deep Weisfeiler-Leman kernel (DWL)~\cite{Yan2015}. We do not compare to~\cite{lwwlkernels}, since the kernel showed results similar to the WLOA kernel.
We compare the graph edit distance approximation using our tree metric GWLT to the original approach Lin~\cite{1_gedLinear} and state-of-the-art method BGM~\cite{27_Riesen}. Our implementation is publicly available at \url{https://github.com/frareba/GradualWeisfeilerLeman}.

\subsection{Setup}
\label{subsec:setup}

As discussed in Section~\ref{subsec:disfun} we used $k$-means clustering in our new approach.
If for any color less than $k$ different vectors were present in the clustering step, each distinct vector got its own cluster.
We implemented our GWL, GWLOA and also the original WLST and WLOA in Java. We used the RWL* and DWL Python implementations provided by the authors. Note that in contrast to the other approaches, the RWL* implementation uses multi-threading.

For evaluation, we used the $C$-SVM implementation LIBSVM~\cite{CC01a} and report average classification accuracies obtained by $10$-fold nested cross-validation repeated 10 times with random fold assignments. The parameters of the SVM and the kernels were optimized by the inner cross-validation on the current training fold using grid search. We chose $C \in \{10^{-3}, 10^{-2},\dots, 10^3\}$ for the SVM, $h \in \{0, \dots, 10\}$ for WLST, WLOA, GWL and GWLOA and $k$-means with $k \in \{2,4,8,16\}$. For RWL* we used $h \in \{1, \dots, 4\}$ and default values for the other parameters. In DWL the window size $w$ and dimension $d$ were set to $25$, since this generally worked best out of the combinations from $d,w \in \{5, 25, 50\}$ and no defaults were provided. We used the default settings for the other parameters and again $h \in \{1,\dots,10\}$.
The running time experiments were conducted on an Intel Xeon Gold 6130 machine at 2.1 GHz with 96 GB RAM.
For evaluating the approximation of the graph edit distance, we compare the $1$-nn classification accuracy and used the Java implementation of Lin provided by the authors and implemented our approach GWLT, as well as BGM, also in Java for a fair comparison.

\paragraph{Extension to Edge Labels}
The original Weisfeiler-Leman algorithm can be extended to respect edge labels by updating the colors according to $c_{i+1}(v) = z(c_i(v),\{\!\!\{(l(u,v),c_i(u))\mid u \in N(v)\}\!\!\}).$ All kernels used in the comparison use a similar strategy to incorporate edge labels if present.

\paragraph{Datasets}
We used several real-world datasets from the TUDataset~\cite{Datasets} and the \textit{EGO-Nets} datasets~\cite{80_generalizedWLkernel} for our experiments. See Appendix~\ref{app:ds} and~\ref{app:dssyn} for an overview of the datasets, as well as additional synthetic datasets and corresponding results. We selected these datasets as they cover a wide range of applications, consisting of both molecule datasets and graphs derived from social networks.
See Appendix~\ref{subsec:WLit} for the number of Weisfeiler-Leman iterations needed to reach the stable coloring for each dataset.

\subsection{Results}
\label{subsec:results}
In the following, we present the classification accuracies and running times of the different kernels. We investigate the parameter selection for our algorithm and discuss the application of our approach for approximating the graph edit distance.

\paragraph{Q1: Classification Accuracy}

\begin{table}[tb]
	\centering
	\caption{Average classification accuracy and standard deviation (highest accuracies in \textbf{bold}).}
	\label{tab:acc}
	\scalebox{0.95}{
		\begin{tabular}{lrrrrrr}
			\toprule
			Kernel & PTC\_FM & KKI& EGO-1& EGO-2& EGO-3& EGO-4\\
			\midrule
			WLST 			&  $64.16$	{\scriptsize $\pm	1.30$} & $49.97$	{\scriptsize $\pm	2.88$}& $51.30$	{\scriptsize $\pm	2.42$}& $57.15$	{\scriptsize $\pm	1.61$}&$56.15$	{\scriptsize $\pm	1.67$} & $53.40$	{\scriptsize $\pm	1.77$}\\
			
			DWL 			&$64.18 $	{\scriptsize $\pm	1.46$}& $50.93$	{\scriptsize $\pm	2.87 $} & $55.80 $	{\scriptsize $\pm	1.35 $}& $56.50 $	{\scriptsize $\pm 1.64$}& $55.90$	{\scriptsize $\pm	1.64$}&$53.25 $	{\scriptsize $\pm	 2.81$} \\

			RWL*&$ 62.43	$	{\scriptsize $\pm1.46 $} & $46.54$	{\scriptsize $\pm	4.03 $}&$65.60$	{\scriptsize $\pm2.74$}& $70.20$	{\scriptsize $\pm1.36$}&{\boldmath $67.60$}	{\scriptsize $\pm1.07$} & $74.25$	{\scriptsize $\pm2.12$}\\
			
			WLOA&$62.34$	{\scriptsize $\pm	1.39$} & $48.72$	{\scriptsize $\pm	4.05$}&$55.95$	{\scriptsize $\pm	1.11$}& $60.30$	{\scriptsize $\pm	2.00$}&{$54.25$}	{\scriptsize $\pm	1.35$} & $52.30$	{\scriptsize $\pm	2.29$}\\
			\midrule
			\textbf{GWL}	&$62.61$	{\scriptsize $\pm1.94$} &  {\boldmath$57.79$}	{\scriptsize $\pm3.95$}&  $67.95$	{\scriptsize $\pm2.05$}& {\boldmath $73.65$}	{\scriptsize $\pm	1.86$}&$65.45$	{\scriptsize $\pm1.88$} & {\boldmath $77.45$}	{\scriptsize $\pm 1.97$}\\
			\textbf{GWLOA}&{\boldmath$ 64.58$}	{\scriptsize $\pm 	1.77$} & $47.47$	{\scriptsize $\pm	2.41$}&{\boldmath$69.80$}	{\scriptsize $\pm	1.65$}& $72.40	$	{\scriptsize $\pm2.52$}&{ $67.45$}	{\scriptsize $\pm	1.69$} & $75.35$	{\scriptsize $\pm	1.67$}\\
			\midrule
			& COLLAB &DD & IMDB-B& MSRC\_9& NCI1& REDDIT-B\\
			\midrule
			WLST 			&  $78.98$	{\scriptsize $\pm0.22$}&$79.00$	{\scriptsize $\pm0.52$} & $72.01$	{\scriptsize $\pm0.80$}& $90.13$	{\scriptsize $\pm0.75$}& {$85.96$}	{\scriptsize $\pm	0.18$}&$80.81$	{\scriptsize $\pm	0.52$} \\
			
			DWL 			&  $78.93 $	{\scriptsize $\pm  0.18$}&$78.92  $	{\scriptsize $\pm 0.40 $}& $ 72.36$	{\scriptsize $\pm 0.56	$} & $90.50  $	{\scriptsize $\pm 0.76$}& {$85.68 $}	{\scriptsize $\pm 0.18 $}& $ 80.83$	{\scriptsize $\pm 0.40$} \\

			RWL*	&  {$77.94$}	{\scriptsize $\pm 0.38$}&$ 77.52	$	{\scriptsize $\pm 0.65$} & $72.96 $	{\scriptsize $\pm 0.86 $}&$88.86 $	{\scriptsize $\pm 0.89$}& $79.45 $	{\scriptsize $\pm0.32 $}&{ $ 77.69 $}	{\scriptsize $\pm 0.31 $}\\
			
			WLOA	&  {$80.81$}	{\scriptsize $\pm	0.22$}&{\boldmath$79.44$}	{\scriptsize $\pm0.31$} & $72.60$	{\scriptsize $\pm	0.89$}&$90.68$	{\scriptsize $\pm 0.92$}& {\boldmath $86.29$}	{\scriptsize $\pm	0.13$}&{$89.40$}	{\scriptsize $\pm	0.14$} \\
			
			\midrule
			\textbf{GWL}	& $80.62$	{\scriptsize $\pm0.33$}&$79.00$	{\scriptsize $\pm0.81$} &  {\boldmath$73.66$}	{\scriptsize $\pm1.25$}&  $88.32$	{\scriptsize $\pm1.20$}& { $85.33$}	{\scriptsize $\pm 0.35$}&$86.46$	{\scriptsize $\pm 0.35$} \\
			
			\textbf{GWLOA}	&  {\boldmath$81.30$}	{\scriptsize $\pm0.29$}&{$78.49$}	{\scriptsize $\pm 0.57$} & $72.88$	{\scriptsize $\pm 0.79$}&{\boldmath$91.27$}	{\scriptsize $\pm1.06$}& $85.36$	{\scriptsize $\pm0.36$}&{\boldmath $89.98$}	{\scriptsize $\pm 0.34$}\\
			\bottomrule
		\end{tabular}
	}
\end{table}

Table~\ref{tab:acc} shows the classification accuracy of the different kernels. While on some datasets our new approaches do not outcompete all state-of-the-art methods, they are more accurate in most cases, in some cases even with a large margin to the second-placed (for example on \textit{KKI}, \textit{EGO-1} or \textit{EGO-4}). %
While RWL* is better than our approaches on some datasets, the running time of this method is much higher, cf.~Q3. WLOA also produces very good results on many datasets, but cannot compete on the \textit{EGO-Nets} and synthetic datasets (see Appendix~\ref{app:dssyn}). For molecular graphs (\textit{PTC\_FM}, \textit{NCI}) we see no significant improvements, which can be explained by their small degree and sensitivity of molecular properties to small changes. Overall, our method provides the highest accuracy on 9 of 12 datasets and is close to the best accuracy for the others.

\paragraph{Q2: Parameter Selection}
For GWL and GWLOA two parameters have to be chosen: The number of iterations $h$ and the number $k$ of clusters in $k$-means. We investigate which choices lead to the best classification accuracy.
Figure~\ref{fig:parameterselection} shows the number of times, a specific parameter combination was selected as it provided the best accuracy for the test set. Here, we only show the parameter selection for some of the datasets. The results for the other datasets, as well as the parameter selection for WLST and WLOA, can be found in Appendix~\ref{app:ps}. We can see that for GWL and most datasets the best $k$ is in $\{2,4,8\}$ and on those datasets classification accuracy of GWL exceeds that of WLST. On datasets on which GWL performed worse than WLST, the best choice for parameters is not clear and it seems like a larger $k$ might be beneficial for improving the classification accuracy. Similar tendencies can be observed for GWLOA. 
\begin{figure}[tb]
	\centering
	\begin{subfigure}{0.02\linewidth}
		\rotatebox{90}{\small\hspace{1.5em} GWLOA \hspace{1.5em} GWL}
	\end{subfigure}
	\begin{subfigure}{0.23\linewidth}
		\includegraphics[width=\linewidth]{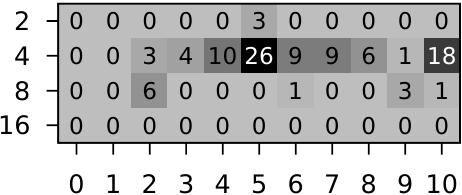} 
		\\\vspace{-.5em}
		\includegraphics[width=\linewidth]{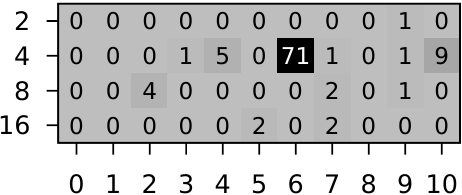}
		\subcaption{EGO-1}
	\end{subfigure}
	\begin{subfigure}{0.23\linewidth}
		\includegraphics[width=\linewidth]{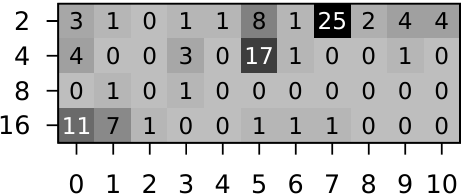}
		\\\vspace{-.5em}
		\includegraphics[width=\linewidth]{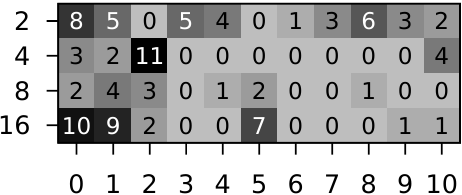}
		\subcaption{KKI}
	\end{subfigure}
	\begin{subfigure}{0.23\linewidth}
		\includegraphics[width=\linewidth]{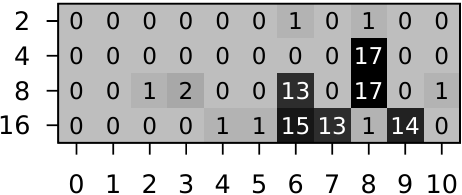}
		\\\vspace{-.5em}
		\includegraphics[width=\linewidth]{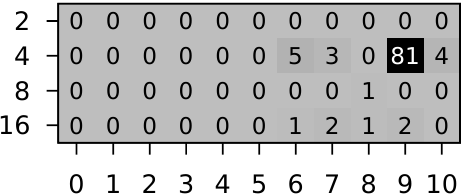}
		\subcaption{PTC\_FM}
	\end{subfigure}
	\begin{subfigure}{0.23\linewidth}
		\includegraphics[width=\linewidth]{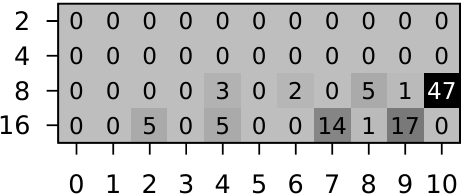}
		\\\vspace{-.5em}
		\includegraphics[width=\linewidth]{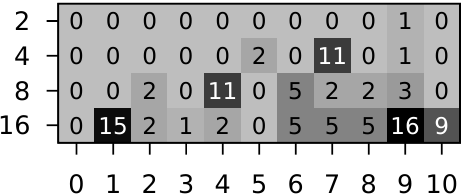}
		\subcaption{IMDB-BINARY}
	\end{subfigure}
	\caption{Number of times a parameter combination of GWL and GWLOA was selected from $k\in \{2,4,8,16\}$ and $h \in\{0,\dots,10\}$ based on the accuracy achieved on the test set.}
	\label{fig:parameterselection}
	\label{fig:parameterselectionoa}
\end{figure}

\paragraph{Q3: Running Time}
Figure~\ref{fig:runtimeplot} shows the time needed for computing the feature vectors using the different kernels (for results on the other datasets and the influence of the parameter $k$ on running time see Appendix~\ref{app:rt} and~\ref{app:pk}). RWL* and DWL are much slower than the other kernels, while only RWL* leads to minor improvements in classification accuracy on few datasets. 
While our approach is only slightly slower than WLST/WLOA, it yields great improvements on the classification accuracy on most datasets, cf.~Q1.

\begin{figure}[tb]
	\centering
	\includegraphics[width=\linewidth]{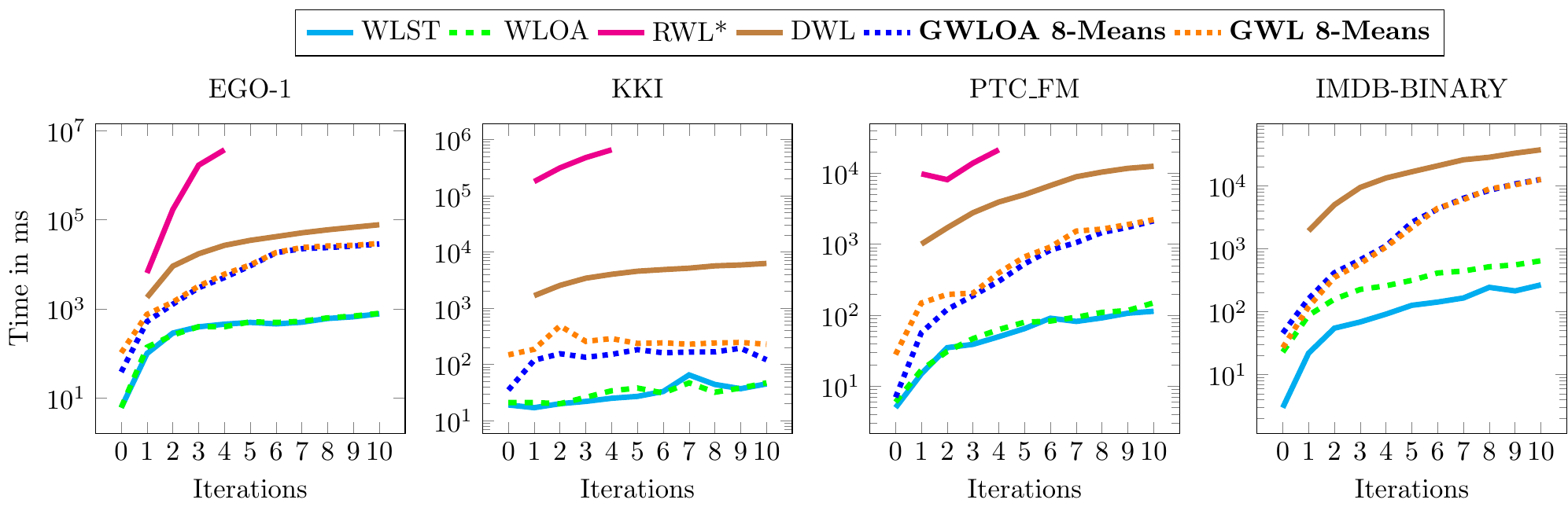}
	\caption{Running time in milliseconds for computing the feature vectors for all graphs of a dataset using the different methods. Note that RWL* uses multi-threading, while the other methods do not. Missing values for RWL* and DWL in the larger datasets are due to timeout.}
	\label{fig:runtimeplot}
\end{figure}

\paragraph{Q4: Learning with Approximated Graph Edit Distance}
\label{sec:approxged}
Table~\ref{tab:ged} compares the $1$-nn classification accuracy of our approach when approximating the graph edit distance to the original~\cite{1_gedLinear} and another state-of-the-art method based on bipartite graph matching~\cite{27_Riesen}. Our approach clearly outcompetes both methods on all datasets.

\begin{table}[tb]
	\centering
	\caption{Average classification accuracy and standard deviation (highest accuracies in \textbf{bold}).}
	\label{tab:ged}
	\tabcolsep=0.05cm
	\begin{tabular}{lrrrrrrr}
		\toprule
		Method  &PTC\_FM & MSRC\_9 & KKI& EGO-1& EGO-2& EGO-3& EGO-4\\
		\midrule
		BGM  & $60.14$	{\scriptsize $\pm 1.50$}&  $72.13$	{\scriptsize $\pm 1.28$}&  $43.89$	{\scriptsize $\pm 1.27$}&  $44.75 $	{\scriptsize $\pm 1.05 $}&$42.05$	{\scriptsize $\pm 1.25 $} & out of time&  out of time\\

		Lin & $62.38$	{\scriptsize $\pm	1.08 $}&  $81.36$	{\scriptsize $\pm 0.64$}&  {\boldmath $55.18$}	{\scriptsize $\pm 2.44$}&$ 40.40	$	{\scriptsize $\pm 1.17$} &  $31.65$	{\scriptsize $\pm1.07$}&$26.60$	{\scriptsize $\pm0.94$} & $36.55$	{\scriptsize $\pm	1.72$}\\
		\midrule
		\textbf{GWLT} & {\boldmath $63.19$}	{\scriptsize $\pm 0.11$}&  {\boldmath $85.97$}	{\scriptsize $\pm 0.59$}&  {\boldmath $55.18$}	{\scriptsize $\pm 2.44$}&  {\boldmath $56.20$}	{\scriptsize $\pm 1.42$}&{\boldmath $47.90$}	{\scriptsize $\pm 1.28$} & {\boldmath $36.40$}	{\scriptsize $\pm 1.04	$}&{\boldmath $47.90$}	{\scriptsize $\pm	1.32$}\\
		\bottomrule
	\end{tabular}
\end{table}

We investigate the approximation quality for similar graphs (which are important for $k$-nn classification) further by comparing the approximation of Lin~\cite{1_gedLinear} to our method GWLT directly, see Figure~\ref{fig:approxquality}. Only graph pairs for which at least one of the methods returned a distance below a cutoff threshold are depicted to emphasize the important case of highly similar graphs. The results clearly show that GWLT provides more accurate approximations, in particular, for the \emph{EGO-Nets}. This finding and the accuracy results discussed in Q1 indicate that on these datasets methods benefit from the more fine-grained vertex similarity provided by GWL.

\begin{figure}[tb]
	\centering
	\begin{subfigure}{0.24\linewidth}
		\includegraphics[height=0.82\linewidth]{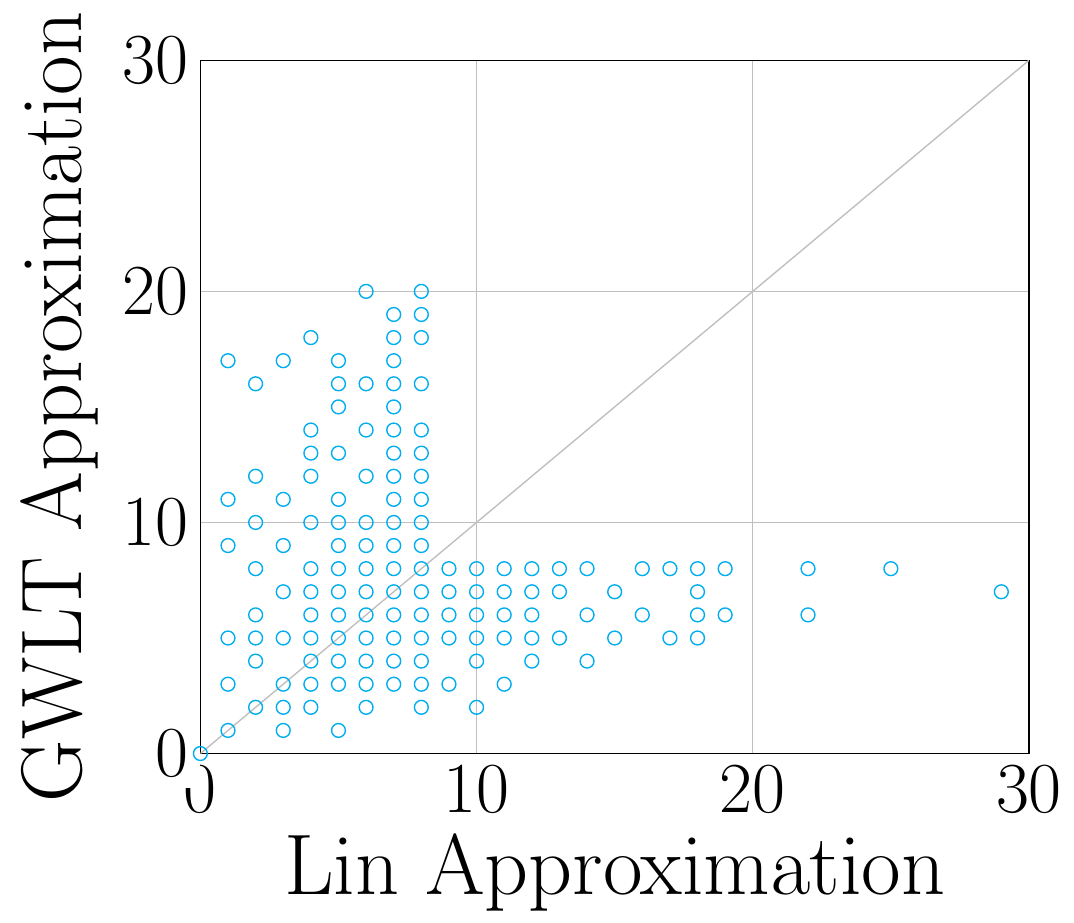}
		\subcaption{PTC\_FM}
	\end{subfigure}
	\begin{subfigure}{0.24\linewidth}
		\includegraphics[height=0.82\linewidth]{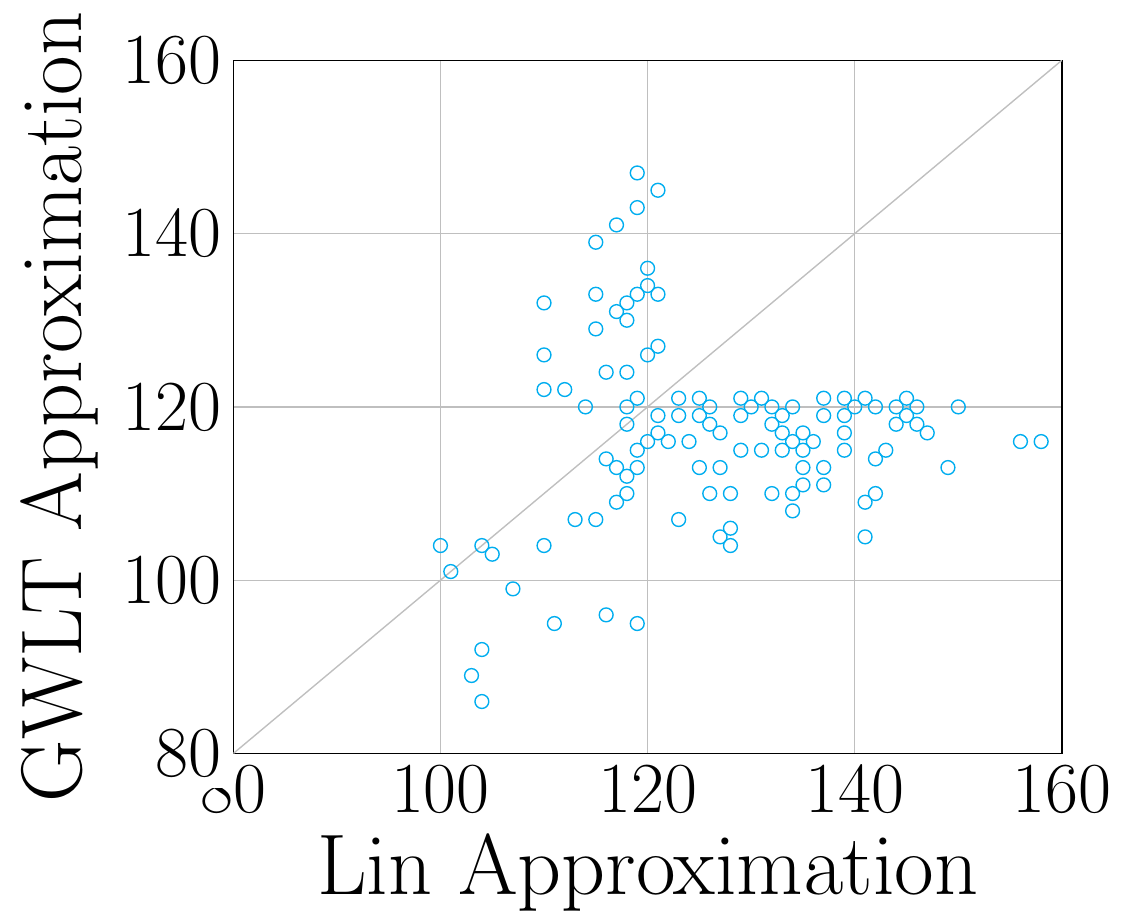}
		\subcaption{MSRC\_9}
	\end{subfigure}
	\begin{subfigure}{0.24\linewidth}
		\includegraphics[height=0.82\linewidth]{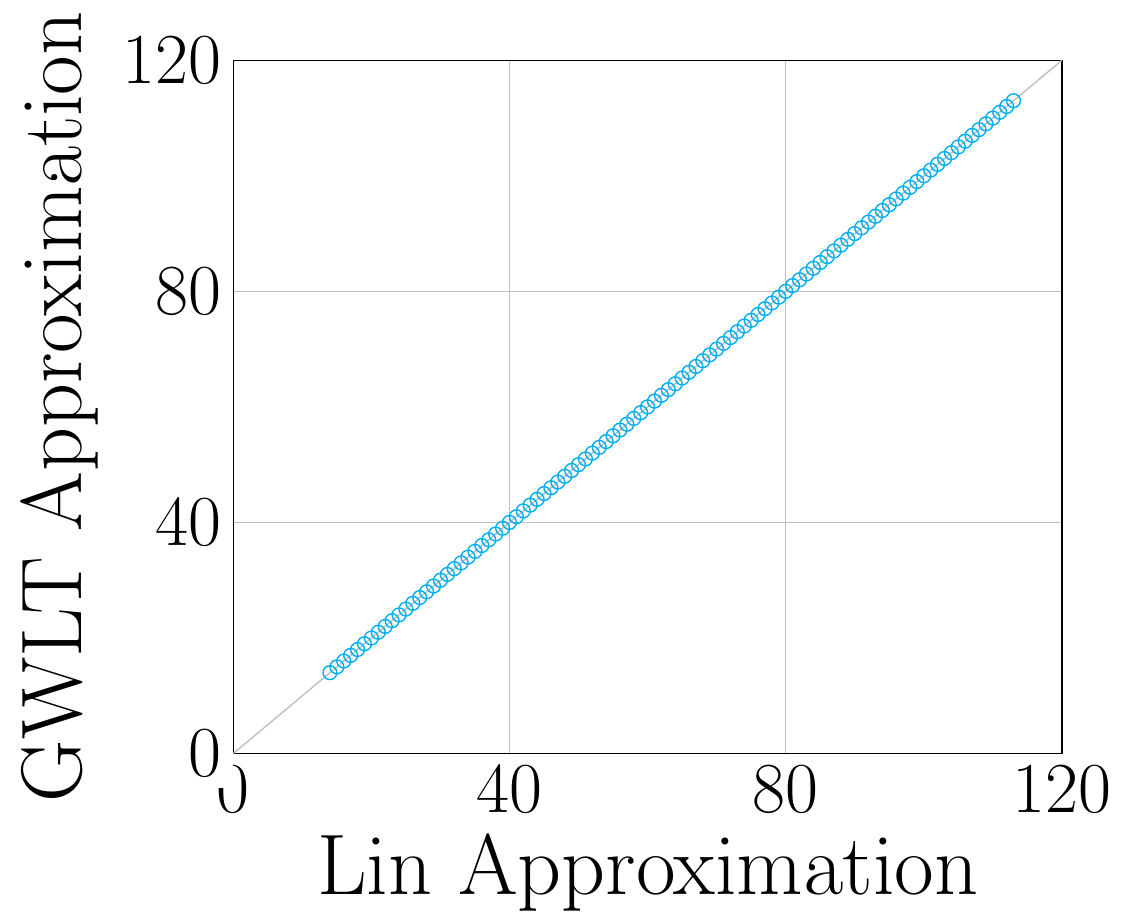}
		\subcaption{KKI}
	\end{subfigure}
	\\
	\begin{subfigure}{0.24\linewidth}
		\includegraphics[height=0.82\linewidth]{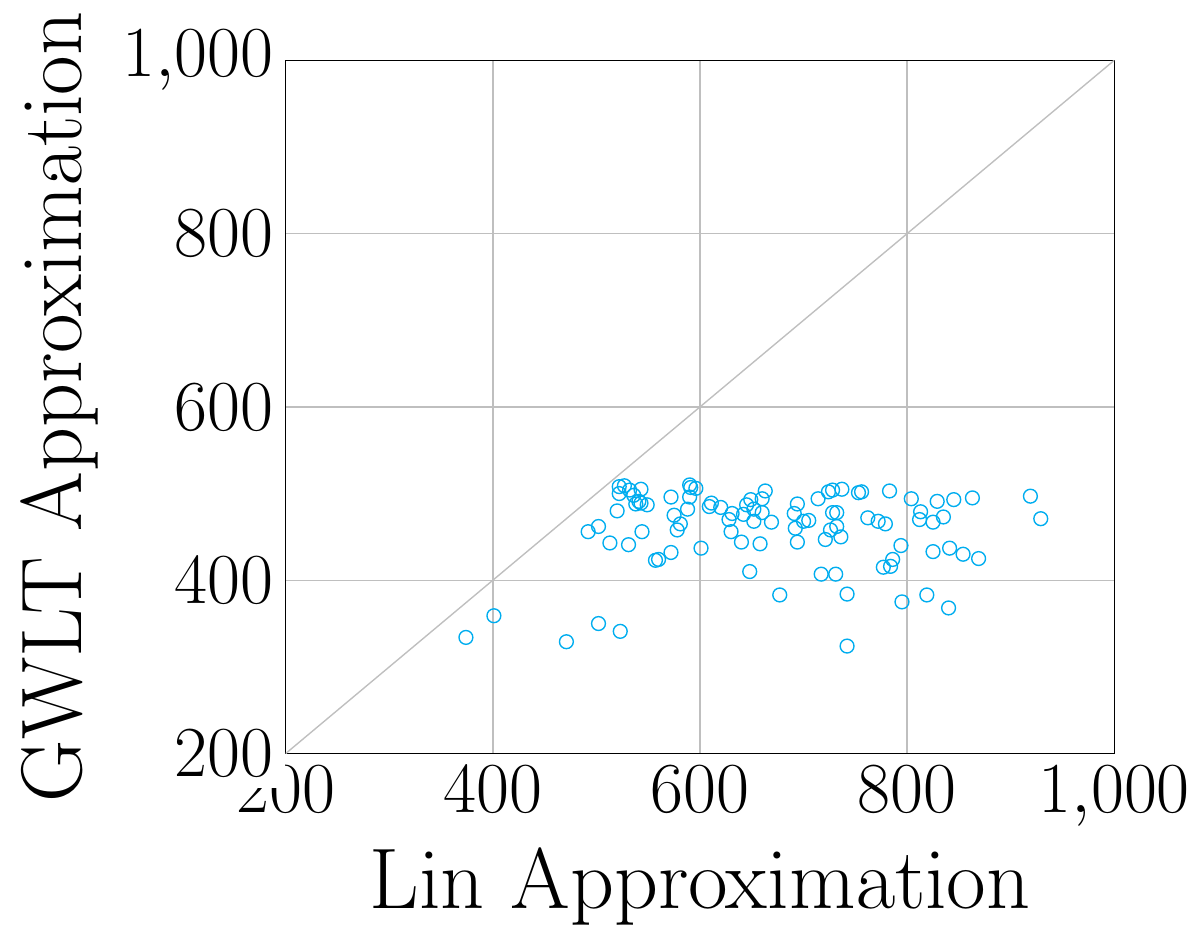}
		\subcaption{EGO-1}
	\end{subfigure}
	\begin{subfigure}{0.24\linewidth}
		\includegraphics[height=0.82\linewidth]{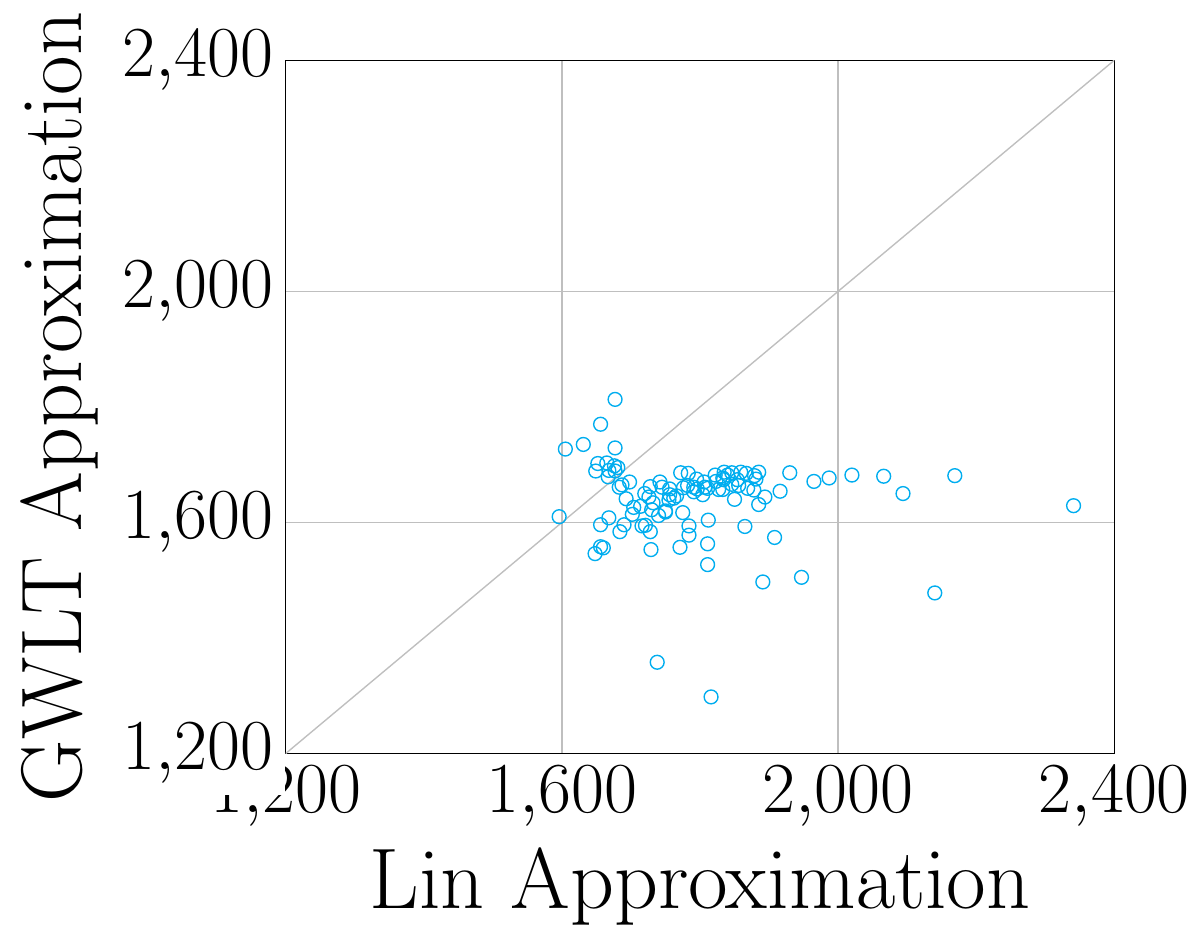}
		\subcaption{EGO-2}
	\end{subfigure}
	\begin{subfigure}{0.24\linewidth}
		\includegraphics[height=0.82\linewidth]{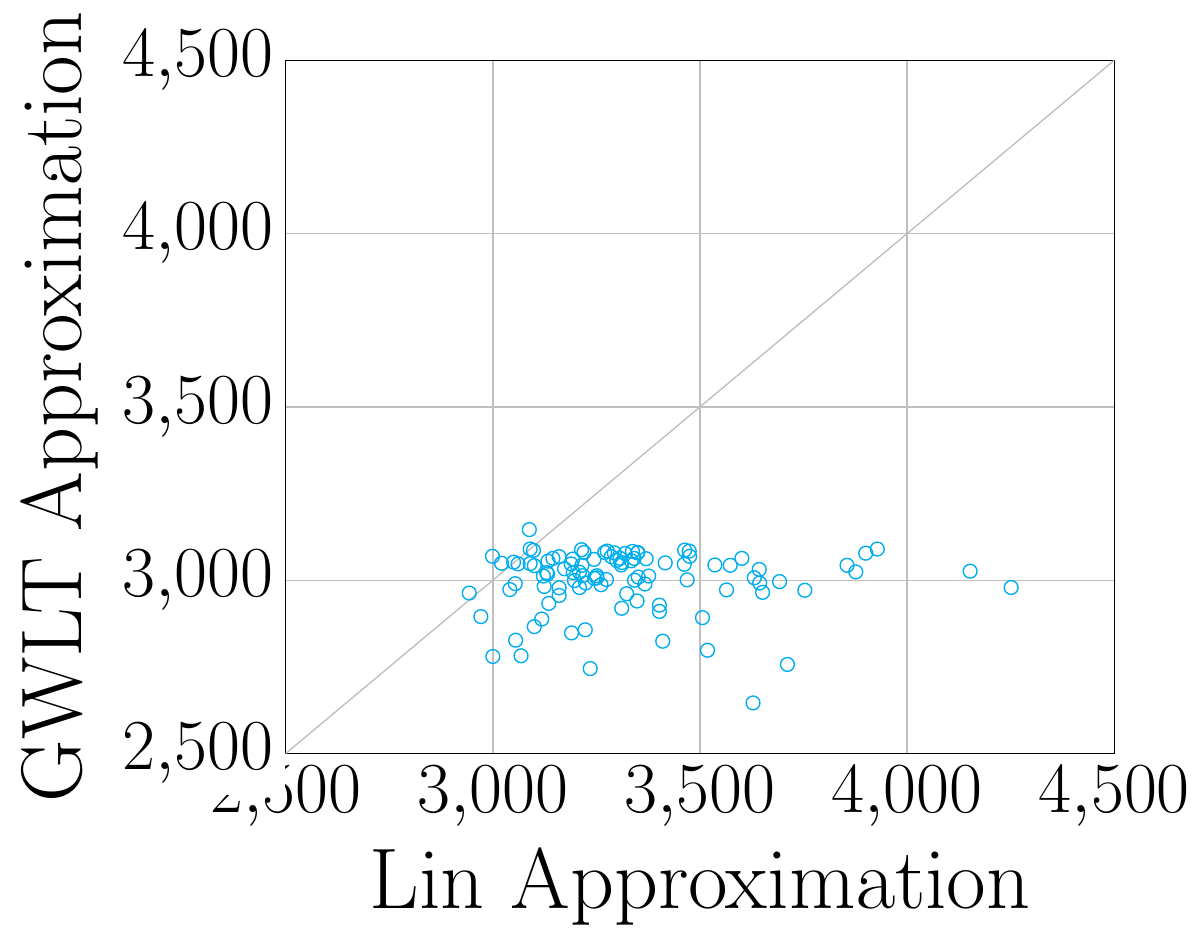}
		\subcaption{EGO-3}
	\end{subfigure}
	\begin{subfigure}{0.24\linewidth}
		\includegraphics[height=0.82\linewidth]{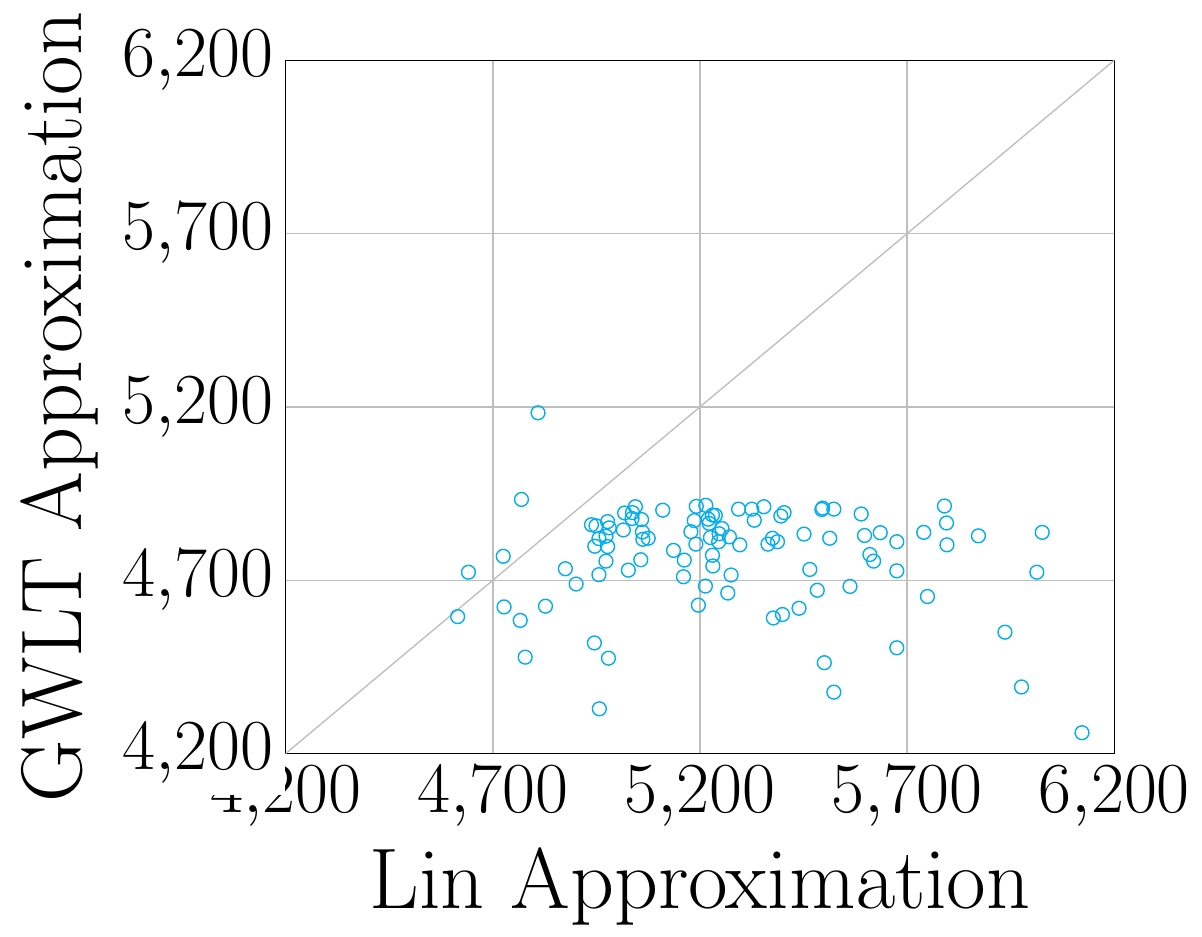}
		\subcaption{EGO-4}
	\end{subfigure}
	\caption{Approximation quality of GWLT compared to Lin~\cite{1_gedLinear}. Marks below the gray diagonal indicate that GWLT had a better approximation, while marks above indicate the same for Lin.}
	\label{fig:approxquality}
\end{figure}

\section{Conclusions}
\label{sec:conclusions}
We proposed a general framework for iterative vertex refinement generalizing the popular Weisfeiler-Leman algorithm and discussed connections to other vertex refinement strategies. Based on this, we proposed two new graph kernels and showed that they outperform the original Weisfeiler-Leman subtree kernel and similar state-of-the-art approaches in terms of classification accuracy in almost all cases, while keeping the running time much lower than comparable methods. We also investigated the application of our method to approximating the graph edit distance, where we again outperformed the state-of-the-art methods.

In further research, other renep functions can be explored, for example, by using different clustering strategies or developing new concepts for inexact neighborhood comparison.
Moreover, we will systematically relate our approach to graph neural networks and investigate whether similar ideas can be incorporated into their neighborhood aggregation step and to what extent common architectures and optimization methods are capable of learning certain renep functions.

\bibliographystyle{unsrtnat}
\bibliography{lit}

\newpage
\appendix

\section{Pseudocode}
Algorithm~\ref{alg:gwl} shows the procedure of gradual Weisfeiler-Leman refinement. We start with an initial coloring (either uniform or based on the vertex labels) and then iteratively refine these colors using a renep function, $h$ times.
\begin{algorithm}
	\caption{Gradual Weisfeiler-Leman Refinement.}\label{alg:gwl}
	\begin{algorithmic}[1]
		\Procedure{GWL}{$G$, $h$} \Comment{$h$ is number of iterations}
		\ForAll{$v \in V(G)$} \Comment{Initial coloring}
		\State $c_0 (v) \gets \mu(v)$
		\EndFor
		\State initialize $T_0$ as described in Section~\ref{sec:preliminaries}
		\For{$i \gets 1, i\leq h, i\gets i+1$}
		\State $\mathcal{T}_{i}\gets f(G,\mathcal{T}_{i-1})$  \Comment{Compute new colors}
		\ForAll{ $v \in V(G)$} \Comment{Assign new colors}
		\State $c_{i}(v) \gets \pi_{\mathcal{T}_{i}}(v)$
		\EndFor
		\EndFor
		\EndProcedure
	\end{algorithmic}
\end{algorithm}

\section{Datasets}
\label{app:ds}
We used several real-world datasets from the TUDataset~\cite{Datasets}, the \textit{EGO-Nets} datasets~\cite{80_generalizedWLkernel}, as well as synthetic datasets for our experiments. See Table~\ref{tab:datasets} for an overview of the real-world datasets. We selected these datasets as they cover a wide range of applications, consisting of both molecule datasets and graphs derived from social networks.
\begin{table}[tb]\centering
	\caption{Datasets with discrete vertex and edge labels and their statistics~\cite{Datasets}. The \textit{EGO-Nets} datasets~\cite{80_generalizedWLkernel} are unlabeled.}
	\label{tab:datasets}
	\begin{tabular}{lrrrrrr}
		\toprule
		\textbf{Name}  & \boldmath\textbf{$\vert$Graphs$\vert$}& \boldmath\textbf{$\vert$Classes$\vert$} & \boldmath\textbf{avg $\vert V\vert$} & \boldmath\textbf{avg $\vert E \vert$} &  \boldmath\textbf{$\vert L_V\vert$}& \boldmath\textbf{$\vert L_E\vert$}\\
		\midrule
		\textit{KKI}	&	$83$ &$2$& $26.96$&	$48.42$ & $190$ &$-$\\
		\textit{PTC\_FM}&	$349$ &$2$ &	$14.11$&	$14.48$ &$18$&$4$\\
		\midrule
		\textit{COLLAB}	&	$5000$ &$3$& $74.49$&	$2457.78$ & $-$ &$-$\\
		\textit{DD}	&	$1178$ &$2$& $284.32$&	$715.66$ & $82$ &$-$\\
		\textit{IMDB-BINARY}	&	$1000$ &$2$& $19.77$&	$96.53$ & $-$ &$-$\\
		\textit{MSRC\_9}	&	$221$ &$2$& $40.58$&	$97.94$ & $10$ &$-$\\
		\textit{NCI1}	&	$4110$ &$2$& $29.87$&	$32.30$ & $37$ &$-$\\
		\textit{REDDIT-BINARY}	&	$2000$ &$2$& $429.63$&	$497.75$ & $-$ &$-$\\
		\midrule
		\textit{EGO-1} & $200$ &$4$ & $138.97$ & $593.53$ &$-$&$-$\\
		\textit{EGO-2} & $200$ &$4$ & $178.55$ & $1444.86$ &$-$&$-$\\
		\textit{EGO-3} & $200$ &$4$ & $220.01$ & $2613.49$ &$-$&$-$\\
		\textit{EGO-4} & $200$ &$4$ & $259.78$ & $4135.80$ &$-$&$-$\\
		
		\bottomrule
	\end{tabular}
\end{table}

The synthetic datasets were generated using the block graph generation method~\cite{80_generalizedWLkernel}.
We generated $9$ synthetic datasets with two classes and $200$ graphs in each class.
For each dataset we first generated two seed graphs (one per class) with $16$ vertices, that both are constructed from a tree by appending a single edge, so that their sets of vertex degrees are equal.
For the dataset graphs each vertex of the seed graph was replaced by $8$ vertices. Vertices generated from the same seed vertex, as well as from adjacent vertices are connected with probability $p$. $m$ noise edges are then added randomly.
We investigated the two cases $p=1.0$ and $m \in \{0,10,20,50,100\}$, and $p \in \{1.0,0.8,0.6,0.4,0.2\}$ and $m=0$. We denote the datasets by \textit{S\_p\_m}.

\section{Number of Weisfeiler-Leman Iterations}
\label{subsec:WLit}
We investigate the number of WL iterations needed to reach the stable coloring in the various datasets.
\begin{table}[tb]\centering
	\caption{Number of iterations needed to reach the stable coloring.}
	\label{tab:wlit}
	\begin{tabular}{lrrrrrr}
		\toprule
		\textbf{Dataset}  &\textit{KKI} & \textit{PTC\_FM}&\textit{COLLAB}&\textit{DD}&	\textit{IMDB-B} &\textit{MSRC\_9} \\ 
		WL	& $3$& $13$& $-$ & $-$ &$3$&$3$\\
		\midrule[1pt]
		\textbf{Dataset} &\textit{NCI1}&	\textit{REDDIT-B}&\textit{EGO-1} &\textit{EGO-2} &\textit{EGO-3} & \textit{EGO-4} \\
		WL	& $39$ & $-$& $5$& $4$& $4$& $5$ \\
		\bottomrule
	\end{tabular}
\end{table}
On the datasets with $-$ entries (see Table~\ref{tab:wlit}) our algorithm, that checked whether the stable coloring is reached, did not finish in a reasonable time due to the size of the datasets/graphs. It can be seen that on most datasets the number of iterations needed to reach the stable  coloring is very low. This means that after a few iterations we do not gain any new information when using for example the traditional WLST kernel.

\section{Parameter Selection - Further Results}
\label{app:ps}
Figures~\ref{fig:parameterselectiona} and~\ref{fig:parameterselectionoaa} show the parameter selection for the remaining datasets for GWL and GWLOA. In most datasets the choice is restricted to two or three values. For some of the datasets, the best choice seems to include $k=16$. This might indicate that a larger $k$ could be beneficial for increasing the accuracy.

Figure~\ref{fig:parameterselectionwl} shows the parameter selection for WLST and WLOA. There is only one parameter, the number of WL iterations, for both kernels. We can see that indeed on most datasets, only few iterations are needed to gain the best possible accuracy. On datasets such as \textit{EGO-2}, \textit{EGO-4} or \textit{NCI1}, however, this is not the case. For \textit{NCI1} we can assume that we still gain information through more iterations, since we have not yet reached the stable coloring. For the \textit{EGO}-datasets, this is surprising, since the stable coloring is reached after 4 (5) iterations. On the other hand, the classification accuracy reached is still not good. 
\begin{figure}[tb]
	\centering
	\begin{subfigure}{0.24\linewidth}
		\includegraphics[width=\linewidth]{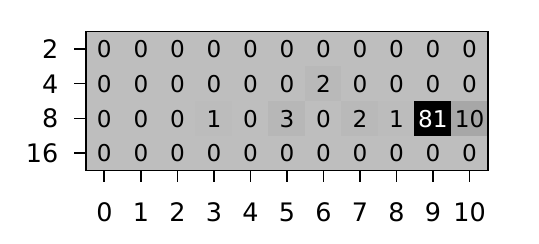}
		\subcaption{EGO-2}
	\end{subfigure}
	\begin{subfigure}{0.24\linewidth}
		\includegraphics[width=\linewidth]{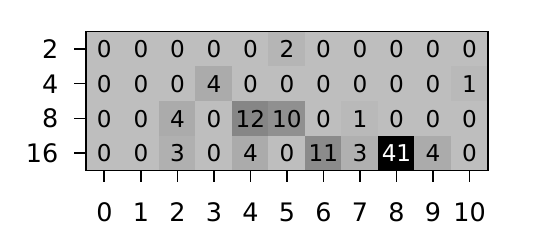}
		\subcaption{EGO-3}
	\end{subfigure}
	\begin{subfigure}{0.24\linewidth}
		\includegraphics[width=\linewidth]{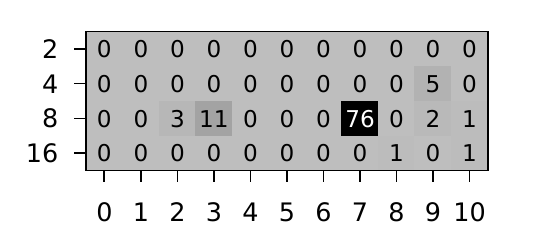}
		\subcaption{EGO-4}
	\end{subfigure}
	\begin{subfigure}{0.24\linewidth}
		\includegraphics[width=\linewidth]{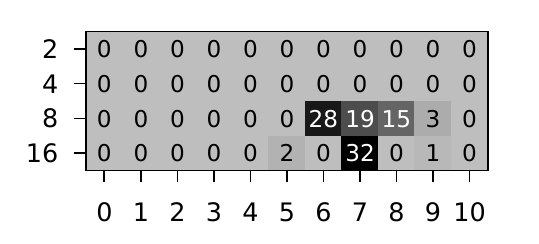}
		\subcaption{COLLAB}
	\end{subfigure}
	\begin{subfigure}{0.24\linewidth}
		\includegraphics[width=\linewidth]{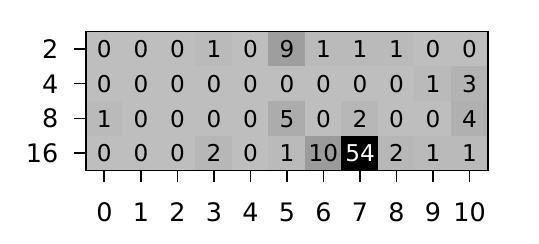}
		\subcaption{DD}
	\end{subfigure}
	\begin{subfigure}{0.24\linewidth}
		\includegraphics[width=\linewidth]{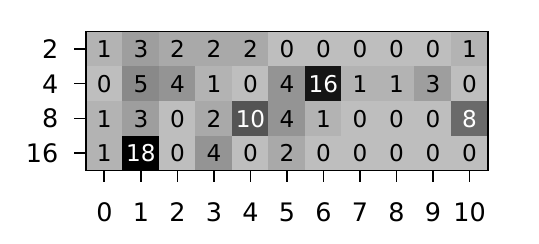}
		\subcaption{MSRC\_9}
	\end{subfigure}
	\begin{subfigure}{0.24\linewidth}
		\includegraphics[width=\linewidth]{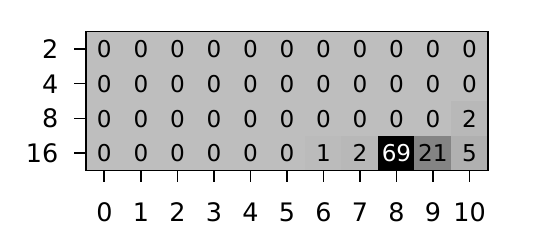}
		\subcaption{NCI1}
	\end{subfigure}
	\begin{subfigure}{0.24\linewidth}
		\includegraphics[width=\linewidth]{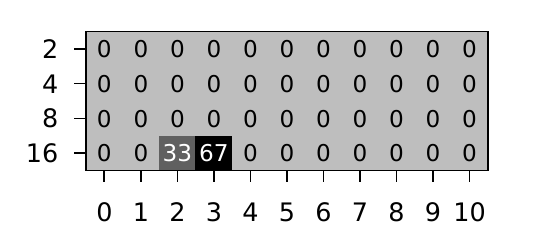}
		\subcaption{REDDIT-BINARY}
	\end{subfigure}
	\caption{With $k\in \{2,4,8,16\}$ and $h \in\{0,\dots,10\}$, we show the number of times a specific parameter combination for GWL was selected as it provided the best accuracy for the test set.}
	\label{fig:parameterselectiona}
\end{figure}

\begin{figure}[tb]
	\centering
	\begin{subfigure}{0.24\linewidth}
		\includegraphics[width=\linewidth]{GWLkMeansEGO-2parameter_selection}
		\subcaption{EGO-2}
	\end{subfigure}
	\begin{subfigure}{0.24\linewidth}
		\includegraphics[width=\linewidth]{GWLkMeansEGO-3parameter_selection}
		\subcaption{EGO-3}
	\end{subfigure}
	\begin{subfigure}{0.24\linewidth}
		\includegraphics[width=\linewidth]{GWLkMeansEGO-4parameter_selection}
		\subcaption{EGO-4}
	\end{subfigure}
	\begin{subfigure}{0.24\linewidth}
		\includegraphics[width=\linewidth]{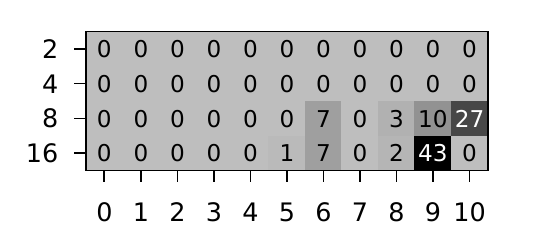}
		\subcaption{COLLAB}
	\end{subfigure}
	\begin{subfigure}{0.24\linewidth}
		\includegraphics[width=\linewidth]{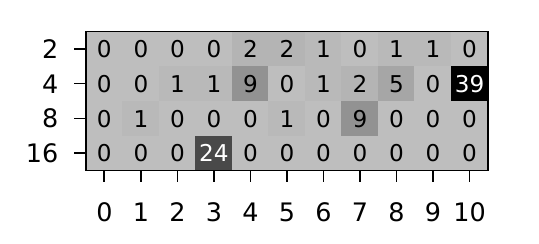}
		\subcaption{DD}
	\end{subfigure}
	\begin{subfigure}{0.24\linewidth}
		\includegraphics[width=\linewidth]{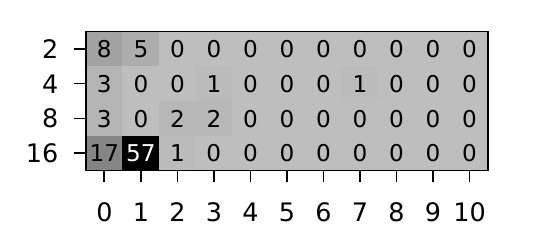}
		\subcaption{MSRC\_9}
	\end{subfigure}
	\begin{subfigure}{0.24\linewidth}
		\includegraphics[width=\linewidth]{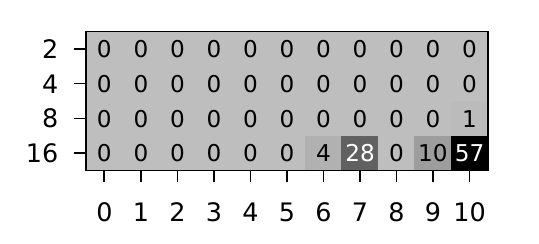}
		\subcaption{NCI1}
	\end{subfigure}
	\begin{subfigure}{0.24\linewidth}
		\includegraphics[width=\linewidth]{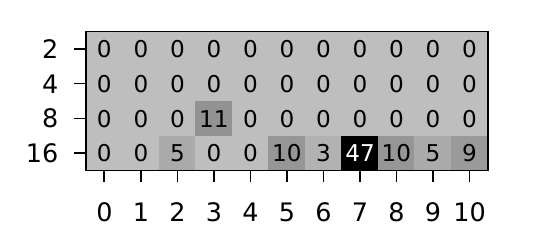}
		\subcaption{REDDIT-BINARY}
	\end{subfigure}
	\caption{With $k\in \{2,4,8,16\}$ and $h \in\{0,\dots,10\}$, we show the number of times a specific parameter combination for GWLOA was selected as it provided the best accuracy for the test set.}
	\label{fig:parameterselectionoaa}
\end{figure}

\begin{figure}[tb]
	\centering
	\begin{subfigure}{0.495\linewidth}
		\includegraphics[width=\linewidth]{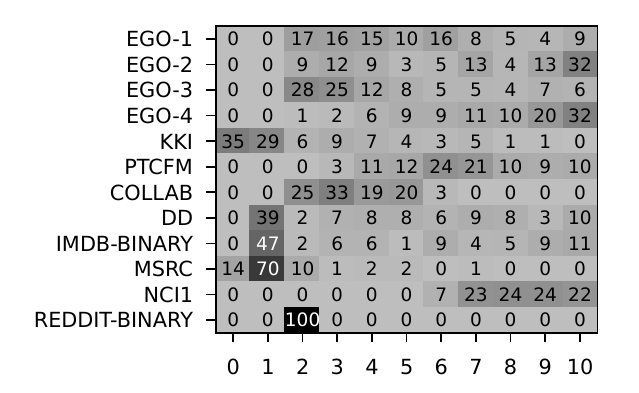}
		\subcaption{WLST}
	\end{subfigure}
	\begin{subfigure}{0.495\linewidth}
		\includegraphics[width=\linewidth]{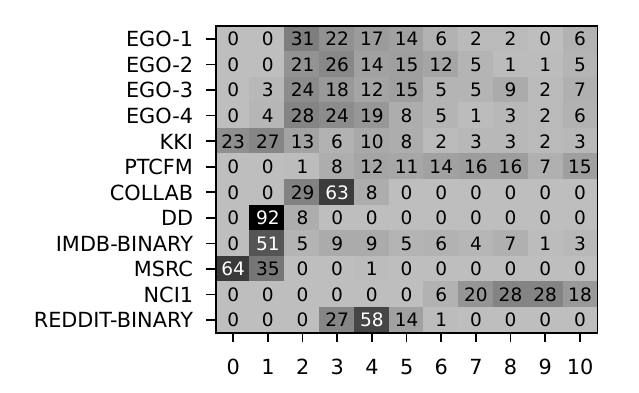}
		\subcaption{WLOA}
	\end{subfigure}
	\caption{With $h \in\{0,\dots,10\}$, we show the number of times a specific parameter combination for WLST and WLOA was selected as it provided the best accuracy for the test set.}
	\label{fig:parameterselectionwl}
\end{figure}

\section{Running Time - Further Results}
\label{app:rt}
Figure~\ref{fig:runtimeplot2} shows the running time results for the remaining datasets. While the runtime of our approach exceeds that of DWL on some of the larger datasets, it enhances the classification accuracy a lot.

\begin{figure}[tb]
	\centering
	\includegraphics[width=\linewidth]{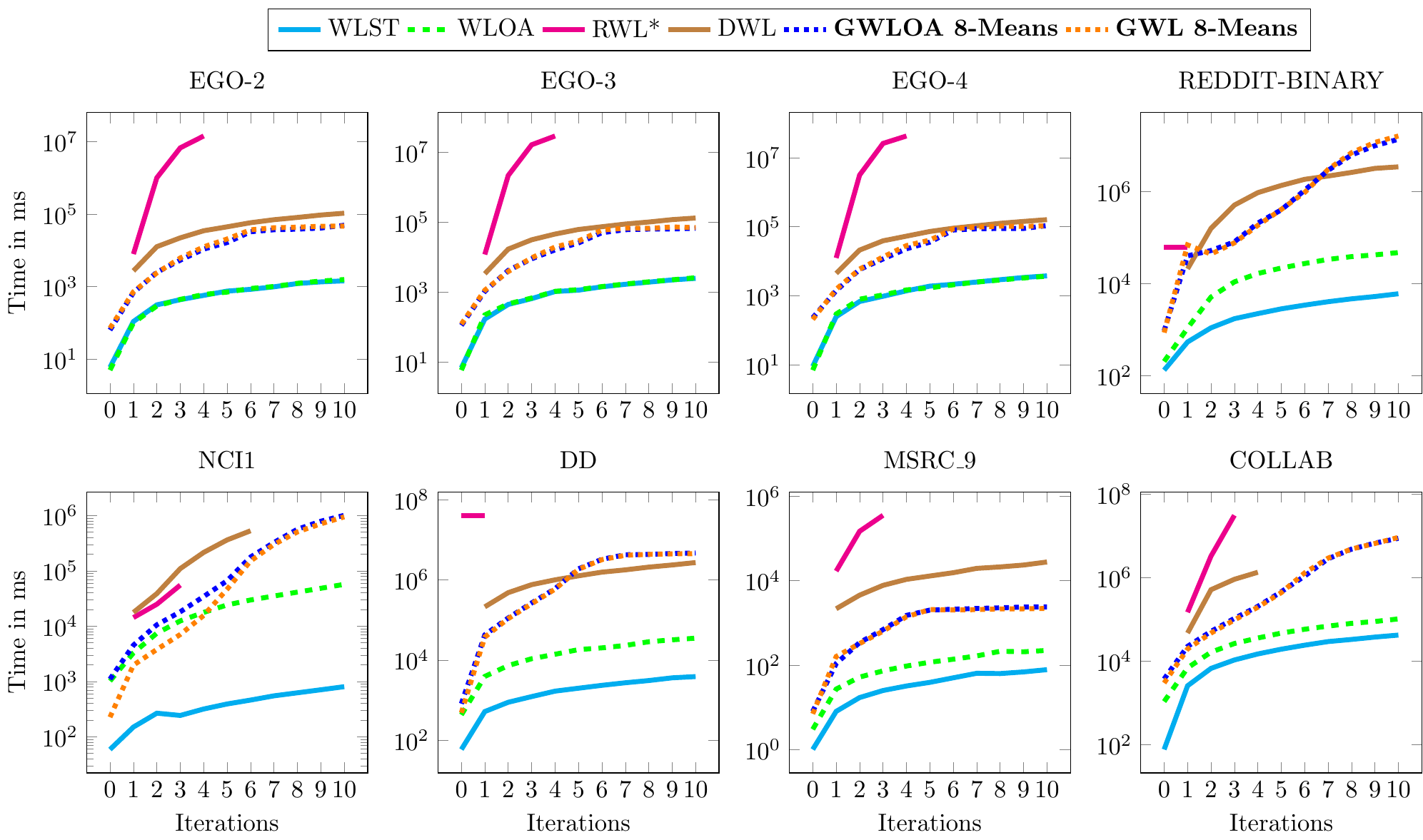}
	\caption{Running time in milliseconds for computing the feature vectors using the different methods. Note that RWL* uses multi-threading, while the other methods do not. Missing values for RWL* and DWL in the larger datasets are due to timeout.}
	\label{fig:runtimeplot2}
\end{figure}

\section{Results on Synthetic Datasets}
\label{app:dssyn}
Table~\ref{tab:accsynth} shows the classification accuracy of the different methods on the synthetic datasets (generated as described in Appendix~\ref{app:ds}), with the best accuracy for each dataset being marked in bold. We can see, while all kernels can perfectly learn on the datasets without noise, neither WLST, WLOA nor DWL can manage the noise included in the other datasets, having worse accuracy with increasing noise. While the decrease in accuracy with decreasing the edge probability is slightly worse than that of RWL*, our approach has a much lower running time.

\begin{table}[tb]
	\centering
	\caption{Average classification accuracy and standard deviation on the synthetic datasets.}
	\label{tab:accsynth}
	\scalebox{0.65}{
		\begin{tabular}{lrrrrrrrrr}
			\toprule
			Kernel & $S\_1\_0$ &$S\_1\_10$ & $S\_1\_20$& $S\_1\_50$& $S\_1\_100$&$S\_0.8\_0$& $S\_0.6\_0$& $S\_0.4\_0$& $S\_0.2\_0$\\
			\midrule
			WLST 			& 
			{\boldmath$100.00$}{\scriptsize $\pm   0.00$}& $98.68$	{\scriptsize $\pm	0.34$}& $61.93$	{\scriptsize $\pm	1.08$}& $54.55$	{\scriptsize $\pm	0.68$}& $49.78$	{\scriptsize $\pm   1.18$}& $50.65$	{\scriptsize $\pm	1.57$}& $48.10$	{\scriptsize $\pm	1.10$}& $51.98$	{\scriptsize $\pm	1.40$}& $42.65$	{\scriptsize $\pm	1.80$}\\
			
			DWL 			& 
			{\boldmath$100.00$}{\scriptsize $\pm   0.00$}& $98.70$	{\scriptsize $\pm	0.31$}& $62.10$	{\scriptsize $\pm	0.98$}& $43.83$	{\scriptsize $\pm	1.66$}& $51.40$	{\scriptsize $\pm   0.94$}& $49.80$	{\scriptsize $\pm	2.01$}& $46.88$	{\scriptsize $\pm	1.89$}& $49.05$	{\scriptsize $\pm	1.98$}& $42.85$	{\scriptsize $\pm	1.98$}\\
			
			RWL*	& {\boldmath$100.00$}{\scriptsize $\pm   0.00$}&
			{\boldmath$100.00$}{\scriptsize $\pm   0.00$}&
			{\boldmath$100.00$}{\scriptsize $\pm   0.00$}&
			out of time&
			{\boldmath$100.00$}{\scriptsize $\pm   0.00$}&
			{\boldmath$100.00$}{\scriptsize $\pm   0.00$}& {\boldmath$99.35$}	{\scriptsize $\pm	0.17$}& {\boldmath$81.93$}	{\scriptsize $\pm	1.00$}& {\boldmath$56.33$}	{\scriptsize $\pm	2.48$}\\
			WLOA 			& 
			{\boldmath$100.00$}{\scriptsize $\pm   0.00$}& $97.65$	{\scriptsize $\pm	0.44$}& $60.85$	{\scriptsize $\pm	1.65$}& $47.50$	{\scriptsize $\pm	1.83$}& $50.23$	{\scriptsize $\pm   1.24$}& $49.45$	{\scriptsize $\pm	1.47$}& $48.08$	{\scriptsize $\pm	1.92$}& $43.23$	{\scriptsize $\pm	1.21$}& $50.53$	{\scriptsize $\pm	1.92$}\\
			\midrule
			\textbf{GWL}	& 	{\boldmath$100.00$}{\scriptsize $\pm   0.00$}&
			{\boldmath$100.00$}{\scriptsize $\pm   0.00$}&
			{\boldmath$100.00$}{\scriptsize $\pm   0.00$}&
			{\boldmath$100.00$}{\scriptsize $\pm   0.00$}&
			{\boldmath$100.00$}{\scriptsize $\pm   0.00$}&
			{\boldmath$100.00$}{\scriptsize $\pm   0.00$}& $92.20$	{\scriptsize $\pm	0.97$}& $72.53$	{\scriptsize $\pm	1.78$}& $53.58$	{\scriptsize $\pm	1.71$}\\
			\textbf{GWLOA}	& 	{\boldmath$100.00$}{\scriptsize $\pm   0.00$}&
			{\boldmath$100.00$}{\scriptsize $\pm   0.00$}&
			{\boldmath$100.00$}{\scriptsize $\pm   0.00$}&
			{\boldmath$100.00$}{\scriptsize $\pm   0.00$}&
			{\boldmath$100.00$}{\scriptsize $\pm   0.00$}&
			{\boldmath$100.00$}{\scriptsize $\pm   0.00$}& $94.95$	{\scriptsize $\pm	0.59$}& $72.03$	{\scriptsize $\pm	1.66$}& $50.90$	{\scriptsize $\pm	2.63$}\\
			\bottomrule
		\end{tabular}
	}
\end{table}

\section{Influence of Parameter $k$ on Running Time}
\label{app:pk}
We investigate which effect the choice of $k$ has on the running time. Figure~\ref{fig:runtimeplotk} shows the time needed for computing the feature vectors using our kernels with $k\in\{2,4,8,16\}$. The difference in running time between GWL and GWLOA is only marginal on most datasets, only on \textit{KKI} and \textit{NCI1} a larger difference can be seen. As expected, the running time of both kernels increases with increasing $k$. Interestingly, for larger $k$, the running time does not increase much anymore, after a certain number of iterations, this might be because the stable coloring was reached by then.

\begin{figure}[tb]
	\centering
	\includegraphics[width=\linewidth]{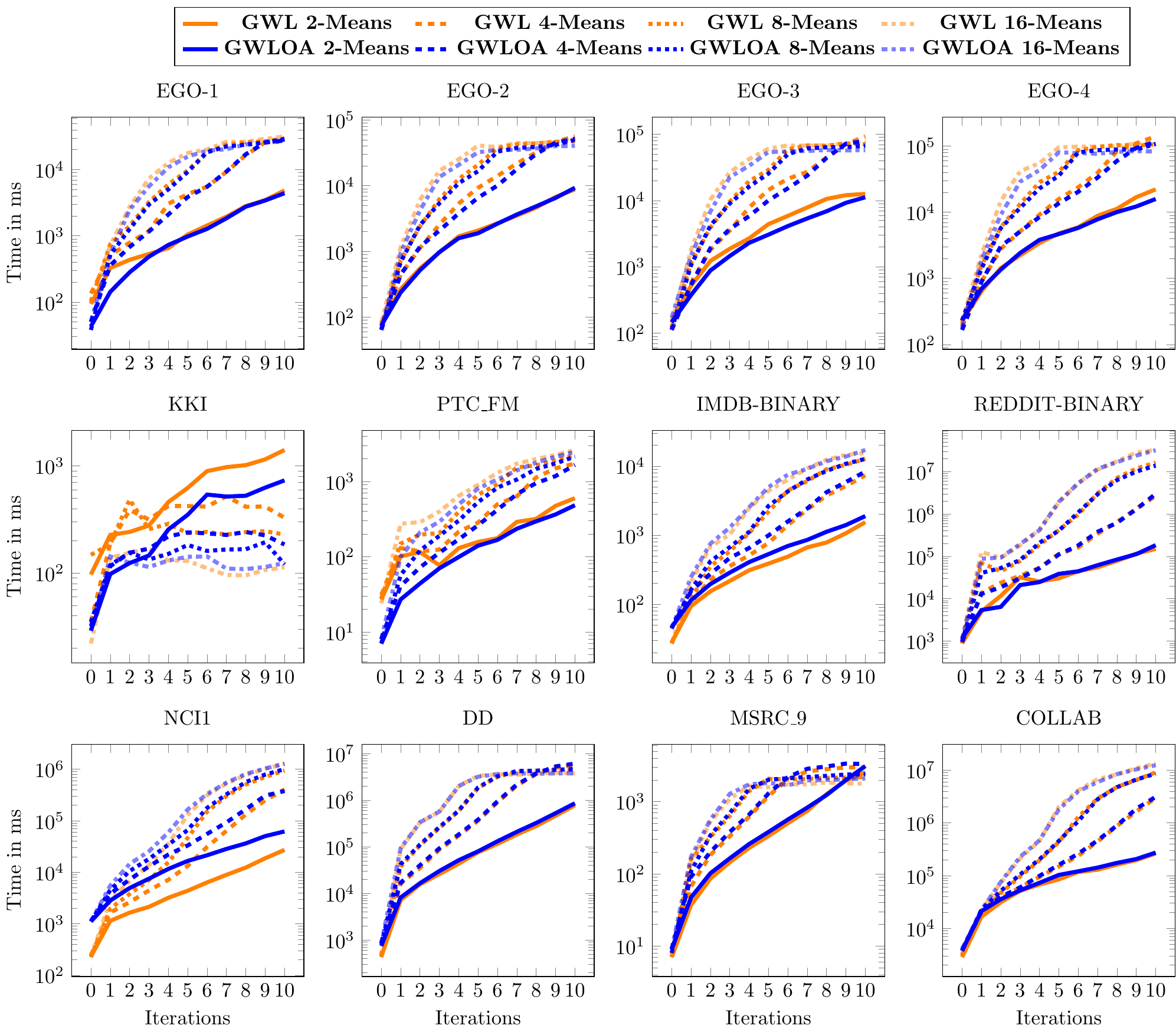}
	\caption{Running time in milliseconds for computing the feature vectors using the different values for parameter $k$ on our newly proposed methods.}
	\label{fig:runtimeplotk}
\end{figure}

\section{Results on Larger Datasets}
\label{app:large}
GWL allows to generate explicit sparse feature vectors just as WLST. Therefore, these kernels can be used with a linear SVM, which is more efficient than a kernel SVM and makes the application to larger datasets feasible.
We performed additional experiments with these kernels on larger synthetic datasets using the linear SVM implementation LIBLINEAR~\cite{REF08a}. The datasets were generated using the same method as described in Appendix~\ref{app:ds}, but with different values for the number of vertices in the seed graphs, $b$, and number of vertices, each vertex in the seed graph is replaced by, $r$ (see Table~\ref{tab:para} for the values of the parameters). The experimental setup used in these experiments was as follows: We split the dataset randomly into a training, validation and test set and chose the parameter $C\in \{0.1,1,10\}$.
We used $h\in\{1,\dots,5\}$ for both kernels and set $k=2$ for all datasets.

In Table~\ref{tab:large} we report the running time and the number of features $f$ obtained for the choice of $h$ that gave the best classification accuracy (for GWL this choice was $h=2$ for \textit{L4} and \textit{L5}, $h=3$ for \textit{L2} and $h=5$ for \textit{L1}, for WLST it was $h=2$ for all datasets except \textit{L4}, on which it was $h=5$).
We noticed that running the SVM with the feature vectors generated by WLST took much more time than GWL, which can be explained by the number of features each algorithm produced: For GWL the number is restricted by the choice of parameters, but for WLST the number of features is only restricted by the number of vertices in the dataset. In WLST the number of features exceeded $10^6$ for $h \geq 2$.
For a small increase in running time in generating the feature vectors, GWL provides not only a much better classification accuracy than WLST, it also generates less features, but more meaningful ones.

\begin{table}[tb]\centering
	\caption{Parameters used for generating the larger datasets.}
	\label{tab:para}
	\begin{tabular}{lrrrrr}
		\toprule
		\textbf{Dataset} &\boldmath $p$ & \boldmath$m$ &\boldmath\textbf{$\vert$Graphs$\vert$} & \boldmath$b$&\boldmath$r$\\ 
		\midrule
		\textit{L1} &$1$& $200$	& $10000$& $25$& $10$\\ %
		\textit{L2} &$1$& $100$	& $20000$& $25$& $10$\\ %
		\textit{L3} &$1$& $200$	& $20000$& $10$& $15$\\ %
		\textit{L4} &$1$& $400$	& $20000$& $25$& $10$\\ %
		\bottomrule
	\end{tabular}
\end{table}
\begin{table}[tb]\centering
	\caption{Results on larger datasets with running time in seconds and $|f|$ being the number of features (for WLST we only give the order of magnitude).}
	\label{tab:large}
	\begin{tabular}{lrrrrrrrrrrrr}
		\toprule
		\textbf{Kernel}  &\multicolumn{3}{>{\columncolor[gray]{0.8}}c}{\textit{L1}}&\multicolumn{3}{c}{\textit{L2}}&\multicolumn{3}{>{\columncolor[gray]{0.8}}c}{\textit{L3}}&\multicolumn{3}{c}{\textit{L4}} \\ 
		&time & acc&$|f|$  &time & acc&$|f|$ &time & acc&$|f|$ &time & acc&$|f|$ \\ 
		\midrule
		WLST	& $25$& $50.92$& $10^6$ & $59$& $84.57$& $10^6 $&  $45$&$49.91$& $10^6 $&  $124$&$49.65$& $10^6$\\
		GWL	& $208$ & $99.98$& $61$& $237$& $100.0$& $15$&  $103$&$100.0$& $7$&  $137$&$99.98$& $7$ \\
		\bottomrule
	\end{tabular}
\end{table}
\section{Approximating the Graph Edit Distance using Optimal Assignments}
\label{app:gedapprox}
The graph edit distance (GED), a commonly used distance measure for graphs, is defined as the cost of transforming one graph into the other using edit operations, i.e. deleting or inserting an isolated vertex or an edge, or relabeling any of the two.
An edit path between $G$ and $H$ is a sequence $(e_1,e_2,\dots,e_k)$ of edit operations that transforms $G$ into $H$. This means, that applying all operations in the edit path to $G$, yields a graph $G'$ that is isomorphic to $H$.

Edit operations have non-negative costs assigned to them by a cost function $c$ and the GED of two graphs is defined as the cost of a cheapest edit path between them:
$${\textsc{GED}}(G,H) = \min\left\{ \sum_{i=1}^{k} c(e_i) \mid (e_1, \dots, e_k) \in \Upsilon(G,H) \right\},$$
where $\Upsilon(G,H)$ denotes the set of all possible edit paths from $G$ to $H$.

Since computing the GED is $\mathsf{NP}$-hard~\cite{2_ComparingStars}, it is often approximated; often an optimal assignment between the vertices of the graphs is computed and a (suboptimal) edit path is derived from this assignment~\cite{27_Riesen}. Figure~\ref{fig:editpath} shows two graphs, an assignment between their vertices indicated by matching numbers, and a corresponding edit path. 

\begin{figure}[tb]
	\centering
	\includegraphics[width=0.8\linewidth]{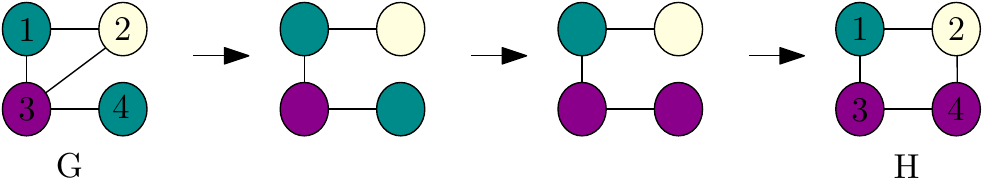}
	\caption{Two graphs $G$ and $H$ with an assignment between their vertices (vertices with matching numbers are assigned to each other) and an edit path derived from this assignment.}
	\label{fig:editpath}
\end{figure}
If the cost function is a tree metric, such an assignment can be computed in linear time~\cite{1_gedLinear} (as opposed to cubic runtime for arbitrary cost functions). The color hierarchy produced by for example GWLT can be interpreted as such a tree metric, which means it can be used to find an optimal assignment between the vertices and in turn approximate the graph edit distance.
Given a tree $T$, representing a tree metric, an optimal assignment between two sets $A$ and $B$ can be computed as follows: Let $\phi$ be a function that associates all elements with their node in the tree (for GWLT this means, that we associate each vertex with its "newest" color). Then deconstruct $T$ fully (until there are no leaves left) by repeating these steps:
\begin{enumerate}
	\item Pick a random leaf $l$.
	\item Assign as many elements of $A$ associated with $l$ to elements of $B$ associated with $l$ as possible.
	\item Re-associate remaining elements to the parent of $l$.
	\item Delete $l$ from $T$.
\end{enumerate}
The resulting assignment is optimal and can be used to derive an edit path as above.

\end{document}